\documentclass{article}

\usepackage{amsmath,amsfonts,bm}

\newcommand{\figleft}{{\em (Left)}}
\newcommand{\figcenter}{{\em (Center)}}
\newcommand{\figright}{{\em (Right)}}

\def\eqref#1{equation~\ref{#1}}

\def\1{\bm{1}}

\DeclareMathAlphabet{\mathsfit}{\encodingdefault}{\sfdefault}{m}{sl}
\SetMathAlphabet{\mathsfit}{bold}{\encodingdefault}{\sfdefault}{bx}{n}

\newcommand{\E}{\mathbb{E}}

\DeclareMathOperator*{\argmax}{arg\,max}

\usepackage{amsfonts}       %
\usepackage{amssymb}
\usepackage{amsthm}
\usepackage{bbm}
\usepackage{booktabs}       %
\usepackage{cancel}
\usepackage{color}
\usepackage{enumitem}
\usepackage[T1]{fontenc}    %
\usepackage{graphicx}
\usepackage[draft]{hyperref}
\usepackage[utf8]{inputenc} %
\usepackage{mathtools}
\usepackage{microtype}      %
\usepackage{multirow}
\usepackage{nicefrac}       %
\usepackage{soul}
\usepackage{subcaption}
\usepackage{todonotes}
\usepackage{url}            %
\usepackage{wrapfig}
\usepackage{xcolor}
\usepackage{pifont}
\usepackage{amsmath}
\newcommand{\cmark}{\text{\ding{51}}}
\newcommand{\xmark}{\text{\ding{55}}}

\usepackage[accepted]{icml2020}

\usepackage{graphbox}  %

\DeclarePairedDelimiterX{\infdivx}[2]{(}{)}{%
  #1\;\delimsize\|\;#2%
}
\newcommand{\kl}{D_{\text{KL}}\infdivx}
\newtheorem{theorem}{Theorem}[section]
\newtheorem{assumption}{Assumption}
\newtheorem{lemma}[theorem]{Lemma}
\newtheorem{proposition}[theorem]{Proposition}
\theoremstyle{definition}
\newtheorem{definition}{Definition}[section]

\icmltitlerunning{Efficient Exploration via State Marginal Matching}

\begin{document}

\twocolumn[
\icmltitle{Efficient Exploration via State Marginal Matching}

\icmlsetsymbol{equal}{*}

\begin{icmlauthorlist}
\icmlauthor{Lisa Lee}{equal,cmu,google}
\icmlauthor{Benjamin Eysenbach}{equal,cmu,google}
\icmlauthor{Emilio Parisotto}{cmu}
\icmlauthor{Eric Xing}{cmu}
\icmlauthor{Sergey Levine}{berkeley,google}
\icmlauthor{Ruslan Salakhutdinov}{cmu}
\end{icmlauthorlist}

\icmlaffiliation{cmu}{Carnegie Mellon University}
\icmlaffiliation{berkeley}{UC Berkeley}
\icmlaffiliation{google}{Google Brain}

\icmlcorrespondingauthor{Lisa Lee}{lslee@cs.cmu.edu}

\icmlkeywords{Deep Reinforcement Learning, Exploration, Density Estimation, Distribution Matching}

\vskip 0.3in
]

\printAffiliationsAndNotice{\icmlEqualContribution} %

\begin{abstract}
Exploration is critical to a reinforcement learning agent's performance in its given environment. Prior exploration methods are often based on using heuristic auxiliary predictions to guide policy behavior, lacking a mathematically-grounded objective with clear properties. 
In contrast, we recast exploration as a problem of \emph{State Marginal Matching} (SMM), where we aim to learn a policy for which the state marginal distribution matches a given target state distribution. The target distribution is a uniform distribution in most cases, but can incorporate prior knowledge if available.
In effect, SMM amortizes the cost of \emph{learning to explore} in a given environment. 
The SMM objective can be viewed as a two-player, zero-sum game between a state density model and a parametric policy, an idea that we use to build an algorithm for optimizing the SMM objective. Using this formalism, we further demonstrate that prior work approximately maximizes the SMM objective, offering an explanation for the success of these methods. 
On both simulated and real-world tasks, we demonstrate that agents that  directly optimize the SMM objective explore faster and adapt more quickly to new tasks as compared to prior exploration methods. \footnote{Videos and code: \url{https://sites.google.com/view/state-marginal-matching}}
\vspace{-1em}
\end{abstract}

\section{Introduction}
\vspace{-0.5em}

Reinforcement learning (RL) algorithms must be equipped with exploration mechanisms to effectively solve tasks with long horizons and limited or delayed reward signals. These tasks arise in many real-world applications where providing human supervision is expensive.

Exploration for RL has been studied in a wealth of prior work.
The optimal exploration strategy is intractable to compute in most settings, motivating work on tractable heuristics for exploration~\citep{kolter2009near}.
Exploration methods based on random actions have limited ability to cover a wide range of states. More sophisticated techniques, such as intrinsic motivation, accelerate learning in the single-task setting. However, these methods have two limitations: (1) First, they lack an explicit objective to quantify ``good exploration,'' but rather argue that exploration arises implicitly through some iterative procedure. Lacking a well-defined optimization objective, it remains unclear what these methods are doing and why they work. Similarly, the lack of a metric to quantify exploration, even if only for evaluation, makes it %
difficult
to compare exploration methods and assess progress in this area.
(2) The second limitation is that these methods target the single-task setting. Because these methods aim to converge to the optimal policy for a particular task, it is difficult to repurpose these methods to solve multiple tasks.

We address these shortcomings by recasting exploration as a problem of \emph{State Marginal Matching (SMM)}: Given a target state distribution, we learn a policy for which the state marginal distribution matches this target distribution. Not only does the SMM problem provide a clear and explicit objective for exploration, but it also provides a convenient mechanism to incorporate prior knowledge about the task through the target distribution --- whether in the form of safety constraints that the agent should obey; preferences for some states over other states; reward shaping; or the relative importance of each state dimension for a particular task. Without any prior information, the SMM objective reduces to maximizing the marginal state entropy $\mathcal{H}[s]$, %
which encourages the policy to visit all states.%

In this work, we study state marginal matching as a metric for task-agnostic exploration.
While this class of objectives has been considered in~\citet{hazan2018provably}, we build on this prior work in a number of dimensions:

\textbf{(1)} {We argue that the SMM objective is an effective way to learn a \emph{single, task-agnostic exploration policy} that can be used for solving many downstream tasks, amortizing the cost of learning to explore for each task.} Learning a single exploration policy is considerably more difficult than doing exploration throughout the course of learning a single task. The latter is done by intrinsic motivation~\citep{pathak2017curiosity,tang2017exploration,oudeyer2007intrinsic} and count-based exploration~\citep{bellemare2016unifying}, which can effectively explore to find states with high reward, at which point the agent can decrease exploration and increase exploitation of those high-reward states. While these methods perform efficient exploration for learning a single task, we show in Sec.~\ref{sec:prediction-error} that the policy at any particular iteration is not a good exploration policy. %

In contrast, maximizing $\mathcal{H}[s]$ produces a stochastic policy at convergence that visits states in proportion to their density under a target distribution. We use this policy as an exploration prior in our multi-task experiments, and also prove that this policy is optimal for a class of goal-reaching tasks (Appendix~\ref{appendix:hitting-time}).

\textbf{(2)} {We explain how to optimize the SMM objective properly.} By viewing the objective as a two-player, zero-sum game between a state density model and a parametric policy, we propose a practical algorithm to jointly learn the policy and the density by using fictitious play~\citep{brown1951iterative}.

We further decompose the SMM objective into a mixture of distributions, and derive an algorithm for learning a mixture of policies that resembles the mutual-information objectives in recent work~\citep{achiam2018variational,eysenbach2018diversity,co2018self}. Thus, these prior work may be interpreted as also almost doing distribution matching, with the caveat that they omit the state entropy term.

\textbf{(3)} {Our analysis provides a unifying view of prior exploration methods as \emph{almost} performing distribution matching}. We show that exploration methods based on predictive error approximately optimizes the same SMM objective, offering an explanation for the success of these methods. However, they omit a crucial historical averaging step, potentially explaining why they do not converge to an exploratory policy.

\textbf{(4)} {We demonstrate on complex RL tasks that optimizing the SMM objective allows for faster exploration and adaptation} than prior state-of-the-art exploration methods.

In short, our paper contributes a method to measure, amortize, and understand exploration.

\section{State Marginal Matching}\label{section:state-marginal-matching}

In this section, we start by showing that exploration methods based on prediction error do not acquire a single exploratory policy. This motivates us to define the State Marginal Matching problem as a principled objective for \emph{learning to explore}. We then introduce an extension of the SMM objective using a mixture of policies. %

\subsection{Why Prediction Error is Not Enough}
\label{sec:prediction-not-enough}
Exploration methods based on prediction error~\citep{burda2018exploration,stadie2015incentivizing, pathak2017curiosity, schmidhuber1991possibility,chentanez2005intrinsically} do not converge to an exploratory policy, even in the absence of extrinsic reward. For example, consider the asymptotic behavior of ICM~\citep{pathak2017curiosity} in a deterministic MDP, such as the Atari games where it was evaluated. At convergence, the predictive model will have zero error in all states, so the exploration bonus is zero -- the ICM objective has no effect on the policy at convergence. Similarly, consider the exploration bonus in Pseudocounts~\citep{bellemare2016unifying}: $1 / \hat{n}(s)$, where $\hat{n}(s)$ is the (estimated) number of times that state $s$ has been visited. In the infinite limit, each state has been visited infinitely many times, so the Pseudocount exploration bonus also goes to zero --- Pseudocounts has no effect at convergence. Similar reasoning can be applied to other methods based on prediction error~\citep{burda2018exploration, stadie2015incentivizing}.
More broadly, we can extend this analysis to stochastic MDPs, where we consider an abstract exploration algorithm that alternates between computing some intrinsic reward and performing RL (to convergence) on that intrinsic reward. Existing prediction-error exploration methods are all special cases. At each iteration, the RL step solves a fully-observed MDP, which always admits a deterministic policy as a solution~\citep{puterman2014markov}.
Thus, any exploration algorithm in this class cannot converge to a single, exploratory policy.
Next, we present an objective which, when optimized, yields a single exploratory policy.

\vspace{-0.5em}
\subsection{The State Marginal Matching Objective}
\label{sec:smm-objective}
We consider a parametric policy $\pi_\theta \in \Pi \triangleq \{\pi_\theta \mid \theta \in \Theta\}$, e.g. a policy parameterized by a deep network, that chooses actions $a \in \mathcal{A}$ in a Markov Decision Process (MDP) with fixed episode lengths $T$,
dynamics distribution $p(s_{t+1} \mid s_t, a_t)$,
and initial state distribution $p_0(s)$. The MDP together with the policy $\pi_\theta$ form an implicit generative model over states.
We define the \emph{state marginal distribution} $\rho_{\pi}(s)$ as the probability that the policy visits state $s$:
\begin{equation*}
    \rho_{\pi}(s) \triangleq \E_{\substack{
    s_1 \sim p_0(S),\\
    a_t \sim \pi_\theta(A \mid s_t)\\
    s_{t+1} \sim p(S \mid s_t, a_t)
    }} \left[\frac{1}{T}\sum_{t=1}^T \mathbbm{1}(s_t = s) \right]
\end{equation*}
The state marginal distribution $\rho_{\pi}(s)$ is a distribution of states, not trajectories: it is the distribution over states visited in a finite-length episode, not the stationary distribution of the policy after infinitely many steps.\footnote{$\rho_{\pi}(s)$ approaches the policy's stationary distribution in the limit as the episodic horizon $T \rightarrow \infty$.}

\begin{figure}[t]
    \centering
    \includegraphics[width=\columnwidth]{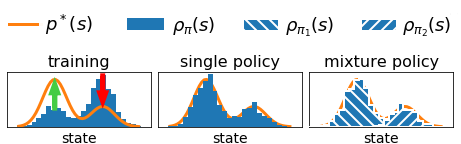}
    \vspace{-2.0em}
    \caption{\textbf{State Marginal Matching}: \figleft \;Our goal is to learn a policy whose state distribution $\rho_\pi(s)$ matches some target density $p^*(s)$. Our algorithm iteratively increases the reward on states visited too infrequently (green arrow) and decreases the reward on states visited too frequently (red arrow). \figcenter\; At convergence, these two distributions are equal. \figright\; For complex target distributions, we use a mixture of policies $\rho_{\pi}(s) = \int \rho_{\pi_z}(s) p(z) dz$.}
    \vspace{-0.5em}
    \label{fig:teaser}
\end{figure}

We assume that we are given a target distribution $p^*(s)$ over states $s \in \mathcal{S}$ that encodes our belief about the tasks we may be given at test-time. For example, a roboticist might assign small values of $p^*(s)$ to states that are dangerous, regardless of the desired task. Alternatively, we might also learn $p^*(s)$ from data about human preferences~\citep{christiano2017deep}. For goal-reaching tasks, we can analytically derive the optimal target distribution (Appendix~\ref{appendix:hitting-time}). Given $p^*(s)$, our goal is to find a parametric policy that is ``closest'' to this target distribution, where we measure discrepancy using the Kullback-Leibler (KL) divergence: %
\vspace{-0.5em}
\begin{align}
    &\min_{\pi \in \Pi} \kl{\rho_{\pi}(s)}{p^*(s)} \nonumber \\
    \triangleq &\max_{\pi \in \Pi} \E_{\rho_{\pi}(s)}\; \log p^*(s) + \mathcal{H}_\pi[s] \label{eq:decomp-1}
    \vspace{-5em}
\end{align}
The SMM objective in Eq.~\ref{eq:decomp-1} can be viewed as maximizing the pseudo-reward
\mbox{$r(s) \triangleq \log p^*(s) - \log \rho_{\pi}(s)$}, which assigns positive utility to states that the agent visits too infrequently and negative utility to states visited too frequently (see Fig.~\ref{fig:teaser}). Maximizing this pseudo-reward is not a RL problem because the pseudo-reward depends on the policy. Maximizing the first term alone without the state entropy regularization term will converge to the \emph{mode} of the target distribution, rather than do distribution matching. Moreover, the SMM objective regularizes the entropy of the state distribution, not the conditional distribution of actions given states, as done in MaxEnt RL~\citep{ziebart2008maximum,haarnoja2018soft}. This results in exploration in the space of states rather than in actions.

\subsection{Better SMM with Mixtures of Policies}
\label{sec:mixture-policies}

Given the challenging problem of exploration in large state spaces, it is natural to wonder whether we can accelerate exploration by automatically decomposing the potentially-multimodal target distribution into a mixture of ``easier-to-learn'' distributions and learn a corresponding set of policies to do distribution matching for each component. Note that the mixture model we introduce here is orthogonal to the historical averaging step discussed before.
Using $\rho_{\pi_z}(s)$ to denote the state distribution of the policy conditioned on the latent variable $z \in \mathcal{Z}$, the state marginal distribution of the mixture of policies $\pi_z$ with prior $p(z)$ is
\begin{equation}
    \rho_{\pi}(s) = \int_\mathcal{Z} \rho_{\pi_z}(s) p(z) dz
    = \E_{z \sim p(z)}\left[ \rho_{\pi_z}(s) \right].
    \vspace{-0.5em}
\end{equation}
As before, we will minimize the KL divergence between this mixture distribution and the target distribution.
Using Bayes' rule to re-write $\rho_{\pi}(s)$ in terms of conditional probabilities, we obtain the following optimization problem:
\begin{align}
&\max_{\substack{\pi_z, \\z \in \mathcal{Z}}} \E_{\substack{p(z),\\\rho_{\pi_z}(s)}} \left[ r_z(s) \right]\;, \label{eq:smm-mop-objective}\\
&r_z(s) \triangleq \underbrace{\log p^*(s)}_{(a)} - \underbrace{\log \rho_{\pi_z}(s)}_{(b)} + \underbrace{\log p(z \mid s)}_{(c)}  - \underbrace{\log p(z)}_{(d)}\nonumber
\vspace{-1cm}
\end{align}
Intuitively, this says that the agent should go to states (a) with high density under the target state distribution, (b) where this agent has not been before, and (c) where this agent is clearly distinguishable from the other agents. The last term (d) says to explore in the space of mixture components $z$.
This decomposition resembles the mutual-information objectives in recent work~\citep{achiam2018variational,eysenbach2018diversity,co2018self}. Thus, one interpretation of our work is as explaining that mutual information objectives almost perform distribution matching. The caveat is that prior work omits the state entropy term $-\log \rho_{\pi_z}(s)$ which provides high reward for visiting novel states, possibly explaining why these previous works have failed to scale to complex tasks.

In Appendix~\ref{appendix:rl-goals}, we also discuss how goal-conditioned RL~\citep{kaelbling1993learning,schaul2015universal} can be viewed as a special case of State Marginal Matching when the goal-sampling distribution is learned jointly with the policy.

\vspace{-0.5em}
\section{A Practical Algorithm}
\label{sec:game-theory}
In this section, we develop a principled algorithm for maximizing the state marginal matching objective. We then propose an extension of this algorithm based on mixture modelling, an extension with close ties to prior work.

\begin{figure}
    \centering
\vspace{-0.5em}
 \begin{algorithm}[H]
    \caption{Learning to Explore via Fictitious Play \label{alg:smm} }
  \begin{algorithmic}[0]
     \footnotesize
        \STATE \textbf{Input:} Target distribution $p^*(s)$
        \STATE Initialize policy $\pi(a \mid s)$, density model $q(s)$, replay buffer $\mathcal{B}$.
\WHILE{not converged}
\STATE $q^{(m)} \gets \argmax_q \mathbb{E}_{s \sim \mathcal{B}^{(m-1)}} \left[ \log q(s) \right]$
\STATE $\pi^{(m)} \gets \argmax_\pi \E_{s \sim \rho_{\pi}(s)} \left[ r(s) \right]$ where $r(s) \triangleq \log p^*(s) -\log q^{(m)}(s)$
\STATE $\mathcal{B}^{(m)} \gets \mathcal{B}^{(m-1)} \cup \{ (s_t,a_t,s_{t+1}) \}_{t=1}^T$ with new transitions sampled from $\pi^{(m)}$
\ENDWHILE
\STATE \textbf{return} historical policies $\{\pi^{(1)}, \cdots, \pi^{(m)}\}$
\end{algorithmic} 
  \end{algorithm}
\vspace{-1.6em}
\caption*{\footnotesize An algorithm for optimizing the State Marginal Matching objective (Eq.~\ref{eq:decomp-1}). The algorithm iterates between (1) fitting a density model $q^{(m)}$ and (2) training the policy $\pi^{(m)}$ with a RL objective  to optimize the expected return w.r.t.\ the updated reward function $r(s)$. The algorithm returns the collection of policies from each iteration, which do distribution matching in aggregate.
}
\vspace{-1.5em}
\end{figure} 

\vspace{-0.5em}
\subsection{Optimizing the State Marginal Matching Objective}
Optimizing Eq.~\ref{eq:decomp-1}
is more challenging than standard RL because the reward function itself depends on the policy.
To break this cyclic dependency, we introduce a parametric state density model $q_\psi(s) \in Q \triangleq \{ q_\psi \mid \psi \in \Psi \}$ to approximate the policy's state marginal distribution, $\rho_{\pi}(s)$. We assume that the class of density models $Q$ is sufficiently expressive to represent every policy:
\begin{assumption}\label{assumption-existence}
For every policy $\pi \in \Pi$, there exists $q \in Q$ such that $\kl{\rho_\pi(s)}{q(s)} = 0$. \label{ass:density}
\end{assumption}
Under this assumption, optimizing the policy w.r.t.\ this approximate distribution $q(s)$ will yield the same solution as Eq.~\ref{eq:decomp-1} (see Appendix~\ref{sec:proofs} for the proof):
\begin{proposition}\label{lemma:max-min-equivalence}
Let policies $\Pi$ and density models $Q$ satisfying Assumption~\ref{ass:density} be given. For any target distribution $p^*$, the following optimization problems are equivalent:
\begin{align}
    \max_\pi &\E_{\rho_{\pi}(s)}[\log p^*(s) - \log \rho_{\pi}(s)] \nonumber\\= \max_\pi \min_q &\E_{\rho_{\pi}(s)}[\log p^*(s) - \log q(s)] \label{eq:min-max-obj}
\end{align}
\end{proposition}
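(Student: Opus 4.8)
The plan is to show the two optimization problems have the same value (and the same maximizers in $\pi$) by analyzing the inner minimization over $q$ for a fixed policy $\pi$. First I would fix an arbitrary $\pi \in \Pi$ and consider the inner problem $\min_{q \in Q} \E_{\rho_\pi(s)}[\log p^*(s) - \log q(s)]$. Since the term $\E_{\rho_\pi(s)}[\log p^*(s)]$ does not depend on $q$, this is equivalent to $\max_{q \in Q} \E_{\rho_\pi(s)}[\log q(s)]$, i.e.\ maximizing the expected log-likelihood of $q$ under $\rho_\pi$. The key identity is that for any $q$,
\begin{equation*}
\E_{\rho_\pi(s)}[\log \rho_\pi(s) - \log q(s)] = \kl{\rho_\pi(s)}{q(s)} \ge 0,
\end{equation*}
with equality iff $\rho_\pi = q$ (almost everywhere). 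Hence $\E_{\rho_\pi(s)}[\log q(s)] \le \E_{\rho_\pi(s)}[\log \rho_\pi(s)]$ for every $q \in Q$, so the inner minimum over $q$ is bounded below by $\E_{\rho_\pi(s)}[\log p^*(s) - \log \rho_\pi(s)]$.

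Next I would invoke Assumption~\ref{ass:density}: for the fixed $\pi$ there exists $q^\star \in Q$ with $\kl{\rho_\pi(s)}{q^\star(s)} = 0$, i.e.\ $q^\star = \rho_\pi$ (up to a $\rho_\pi$-null set, which suffices since the objective is an expectation under $\rho_\pi$). Plugging $q^\star$ into the inner objective gives exactly $\E_{\rho_\pi(s)}[\log p^*(s) - \log \rho_\pi(s)]$, so the lower bound is attained and
\begin{equation*}
\min_{q \in Q} \E_{\rho_\pi(s)}[\log p^*(s) - \log q(s)] = \E_{\rho_\pi(s)}[\log p^*(s) - \log \rho_\pi(s)].
\end{equation*}
Since this holds for every $\pi \in \Pi$, taking $\max_\pi$ of both sides yields the claimed equality of the two optimization problems, and the maximizing policies coincide.

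The main obstacle — really the only subtlety — is handling the measure-theoretic edge cases cleanly: whether $\kl{\rho_\pi}{q}=0$ should be read as $\rho_\pi = q$ exactly or only $\rho_\pi$-almost everywhere, and making sure the expectations $\E_{\rho_\pi}[\log q]$ and $\E_{\rho_\pi}[\log \rho_\pi]$ are well-defined (e.g.\ not $-\infty$). The paper's setup is informal enough that it is fine to note that all quantities are assumed finite and that equality $\rho_\pi$-a.e.\ is all that is needed since every objective term is an expectation under $\rho_\pi$; with that caveat the argument above is complete. One should also remark that the reduction of the inner $\min_q$ to a maximum-likelihood fit over $q$ is exactly what justifies line $q^{(m)} \gets \argmax_q \mathbb{E}_{s \sim \mathcal{B}^{(m-1)}}[\log q(s)]$ in Algorithm~\ref{alg:smm}.
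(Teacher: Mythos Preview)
Your proof is correct and follows essentially the same approach as the paper: both arguments add and subtract $\log \rho_\pi(s)$ to rewrite the inner objective as $\E_{\rho_\pi}[\log p^*(s) - \log \rho_\pi(s)] + \kl{\rho_\pi}{q}$, then use nonnegativity of KL together with Assumption~\ref{ass:density} to conclude that $\min_q \kl{\rho_\pi}{q} = 0$. The paper's version is simply more compressed, while you separate the lower bound and attainment steps and add the (accurate but unnecessary) remarks on measure-theoretic edge cases and the connection to Algorithm~\ref{alg:smm}.
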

\vspace{-1.5em}
Solving the new max-min optimization problem is equivalent to finding the Nash equilibrium of a two-player, zero-sum game: a \emph{policy player} chooses the policy $\pi$ while the \emph{density player} chooses the density model $q$.
To avoid confusion, we use \emph{actions} to refer to controls $a \in \mathcal{A}$ output by the policy $\pi$ in the traditional RL problem and \emph{strategies} to refer to the decisions of the policy player  $\pi \in \Pi$ and density player $q \in Q$. The Nash existence theorem~\citep{nash1951non} proves that such a stationary point always exists for such a two-player, zero-sum game.

One common approach to saddle point games is to alternate between updating player A w.r.t.\ player B, and updating player B w.r.t.\ player A. However, games such as Rock-Paper-Scissors illustrate that such a greedy approach is not guaranteed to converge to a stationary point.
A slight variant, \emph{fictitious play}~\citep{brown1951iterative} does converge to a Nash equilibrium in finite time~\citep{robinson1951iterative,daskalakis2014counter}. At each iteration, each player chooses their best strategy in response to the \emph{historical average} of the opponent's strategies.
In our setting, fictitious play alternates between fitting the density model to the historical average of policies $\bar{\rho}_m(s) \triangleq \frac{1}{m}\sum_{i=1}^m \rho_{\pi_i}(s)$ (Eq.~\ref{eq:density-1}), and updating the policy with RL to minimize the log-density of the state, using a historical average of the density models $\bar{q}_{m}(s) \triangleq \frac{1}{m} \sum_{i=1}^{m} q_i(s)$ (Eq.~\ref{eq:density-2}):
\begin{align}
    q_{m+1} &\gets \argmax_q \E_{s \sim \bar{\rho}_m(s)}[\log q(s)] \label{eq:density-1} \\
    \pi_{m+1} &\gets \argmax_\pi \E_{s \sim \rho_{\pi}(s)} \left[\log p^*(s) -\log \bar{q}_{m}(s)\right] \label{eq:density-2}
\end{align}
Crucially, the exploration policy is not the last policy, $\pi_{m+1}$, but rather the historical average policy:
\begin{definition}
A \emph{historical average policy} $\bar{\pi}(a \mid s)$, parametrized by a collection of policies $\pi_1, \cdots, \pi_m$, is a policy that randomly samples one of the policy iterates \mbox{$\pi_i \sim \text{Unif}[\pi_1, \cdots, \pi_m]$} at the start of each episode and takes actions according to that policy in the episode. %
\end{definition}
We summarize the resulting algorithm in Alg.~\ref{alg:smm}.
In practice, we can efficiently implement Eq.~\ref{eq:density-1} and avoid storing the policy parameters from every iteration by instead storing sampled states from each iteration.
Alg.~\ref{alg:smm} looks similar to prior exploration methods based on prediction-error, suggesting that we might use SMM to understand how these prior methods work (Sec~\ref{sec:prediction-error}).

\vspace{-0.5em}
\subsection{Extension to Mixtures of Policies}

We refer to the algorithm with mixture modelling as SM4 (State Marginal Matching with Mixtures of Mixtures), and summarize the method in Alg.~\ref{alg:smm-mop} in the Appendix. The algorithm (1) fits a density model $q^{(m)}_z(s)$ to approximate the state marginal distribution for each policy $\pi_z$; (2) learns a discriminator $d^{(m)}(z \mid s)$ to predict which policy $\pi_z$ will visit state $s$; and (3) uses RL to update each policy $\pi_z$ to maximize the expected return of its corresponding reward function $r_z(s)$ %
derived in Eq.~\ref{eq:smm-mop-objective}.

The only difference from Alg.~\ref{alg:smm} is that we learn a discriminator $d(z \mid s)$, in addition to updating the density models $q_z(s)$ and the policies $\pi_z(a \mid s)$. Jensen's inequality tells us that maximizing the log-density of the learned discriminator will maximize a lower bound on the true density (see~\citet{agakov2004algorithm}):
\vspace{-0.5em}
\begin{equation*}
    \E_{\substack{s \sim \rho_{\pi_z}(s), \\z \sim p(z)}}[\log d(z \mid s)] \le \E_{s \sim \rho_{\pi_z}(s), z \sim p(z)} [\log p(z \mid s) ]
\vspace{-0.5em}
\end{equation*}
The algorithm returns the historical average of mixtures of policies (a total of $n\cdot m$ policies). Note that updates for each $z$ can be conducted in parallel.

\section{Prediction-Error Exploration is Approximate State Marginal Matching}
\label{sec:prediction-error}

This section compares and contrasts SMM with prior exploration methods that use predictive-error, showing that these methods approximately optimize the same SMM objective when averaged over time, but otherwise exhibits oscillatory learning dynamics. Both the objectives and the optimization procedures are similar, but contain important yet subtle differences.

\begin{table}[t]
\centering
\begin{footnotesize}
\begin{tabular}{c|c|c|c}
     & inputs & targets & prior\\ \hline
    SMM (Ours) & $s$ & $s$ & $\cmark$ \\
    RND & $s$ & $e(s)$ & \xmark \\
    Forward Models%
    & $s, a$ & $s'$  & \xmark \\
    Inverse Models & $s, s'$ & $a$ & \xmark

\end{tabular}
\end{footnotesize}
\vspace{-0.5em}
\caption{\textbf{Exploration based on predictive-error}: A number of exploration methods operate by learning a function that predicts some target quantity given some input quantities, and using this function's error as an exploration bonus. Previous methods have omitted the prior term, which our method implicitly incorporates via historical averaging.
\label{table:predictive-error-exploration}}
\vspace{-1.5em}
\end{table}

\vspace{-0.5em}
\paragraph{Objectives} As introduced in Proposition~\ref{lemma:max-min-equivalence}, the state marginal matching objective can be viewed as a min-max objective (Eq.~\ref{eq:min-max-obj}). When the density model is a VAE and the target distribution $p^*(s)$ is uniform, this min-max objective looks like the prediction error between a state and itself, plus a regularizer:
\vspace{-0.5em}
\begin{equation*}
        \max_\pi \min_\psi \E_{\rho_{\pi}(s)} \left[\|f_\psi(s_t) - s_t\|_2^2 \right] + R_\pi(\psi), \label{eq:smm-min-max}
\vspace{-1.em}
\end{equation*}
where $f_\phi$ is our autoencoder and $R_\pi(\psi)$ is the KL penalty on the VAE encoder for the data distribution $\rho_\pi(s)$. Prior exploration methods look quite similar. For example, 
Exploration methods based on predictive error also optimize a min-max objective. For example, the objective for RND~\citep{burda2018exploration} is
\vspace{-0.5em}
\begin{equation*}
    \max_\pi \min_\psi \E_{\rho_{\pi}(s)} \left[\|f_\psi(s_t) - e(s_t)\|_2^2 \right],
    \vspace{-0.8em}
\end{equation*}
where $e(\cdot)$ is an encoder obtained by a randomly initialized neural network.
Exploration bonuses based on the predictive error of forward models~\citep{schmidhuber1991possibility,chentanez2005intrinsically,stadie2015incentivizing} have a similar form, but instead consider full transitions:
\vspace{-0.5em}
\begin{equation*}
    \max_\pi \min_\psi \E_{\rho_{\pi}(s)} \left[ \|f_\psi(s_t, a_t) - s_{t+1}\|_2^2 \right].
    \vspace{-0.8em}
\end{equation*}
Exploration bonuses derived from inverse models~\citep{pathak2017curiosity} look similar:
\vspace{-0.5em}
\begin{equation*}
    \max_\pi \min_\psi \E_{\rho_{\pi}(s)} \left\|f_\psi(s_t, s_{t+1}) - a_t\|_2^2 \right].
    \vspace{-.5em}
\end{equation*}
We summarize these methods in Table~\ref{table:predictive-error-exploration}. We believe that the prior term $R(\psi)$ in the SMM objective (Eq.~\ref{eq:smm-min-max}) that is omitted from the other objectives possibly explains why SMM continues to explore at convergence.

\vspace{-0.5em}
\paragraph{Optimization} Both SMM and prior exploration methods employ alternating optimization to solve their respective min-max problems. Prior work uses a greedy procedure that optimizes the policy w.r.t. the \emph{current} auxiliary model, and optimizes the auxiliary model w.r.t. the \emph{current} policy.  This greedy procedure often fails to converge, as we demonstrate experimentally in Section~\ref{sec:didactic-experiments}. In contrast, SMM uses fictitious play, a slight modification that optimizes the policy w.r.t. the \emph{historical average} of the auxiliary models and optimizes the auxiliary model w.r.t. the \emph{historical average} of the policies. Unlike the greedy approach, fictitious play is guaranteed to converge. This difference may explain why SMM learns better exploratory policies than prior methods. %

While prior works use a procedure that is not guaranteed to converge, they nonetheless excel at solving hard exploration tasks. We draw an analogy to fictitious play to explain their success. While these methods never acquire an exploratory policy, over the course of training they will eventually visit all states. In other words, the \emph{historical average} over policies will visit a wide range of states. Since the replay buffer exactly corresponds to this historical average over states, these methods will obtain a replay buffer with a diverse range of experience, possibly explaining why they succeed at solving hard exploration tasks. Moreover, this analysis suggests a surprisingly simple method for obtaining an exploration from these prior methods: use a mixture of the policy iterates throughout training. The following section will not only compare SMM against prior exploration methods, but also show that this historical averaging trick can be used to improve existing exploration methods.

\vspace{-0.5em}
\subsection{Didactic Experiments}
\label{sec:didactic-experiments}

\begin{figure}[t]
\centering
\vspace{-0.7em}
    \begin{subfigure}[b]{0.5\linewidth}
        \centering
        \includegraphics[width=\linewidth,trim=120pt 40pt 40pt 0,clip]{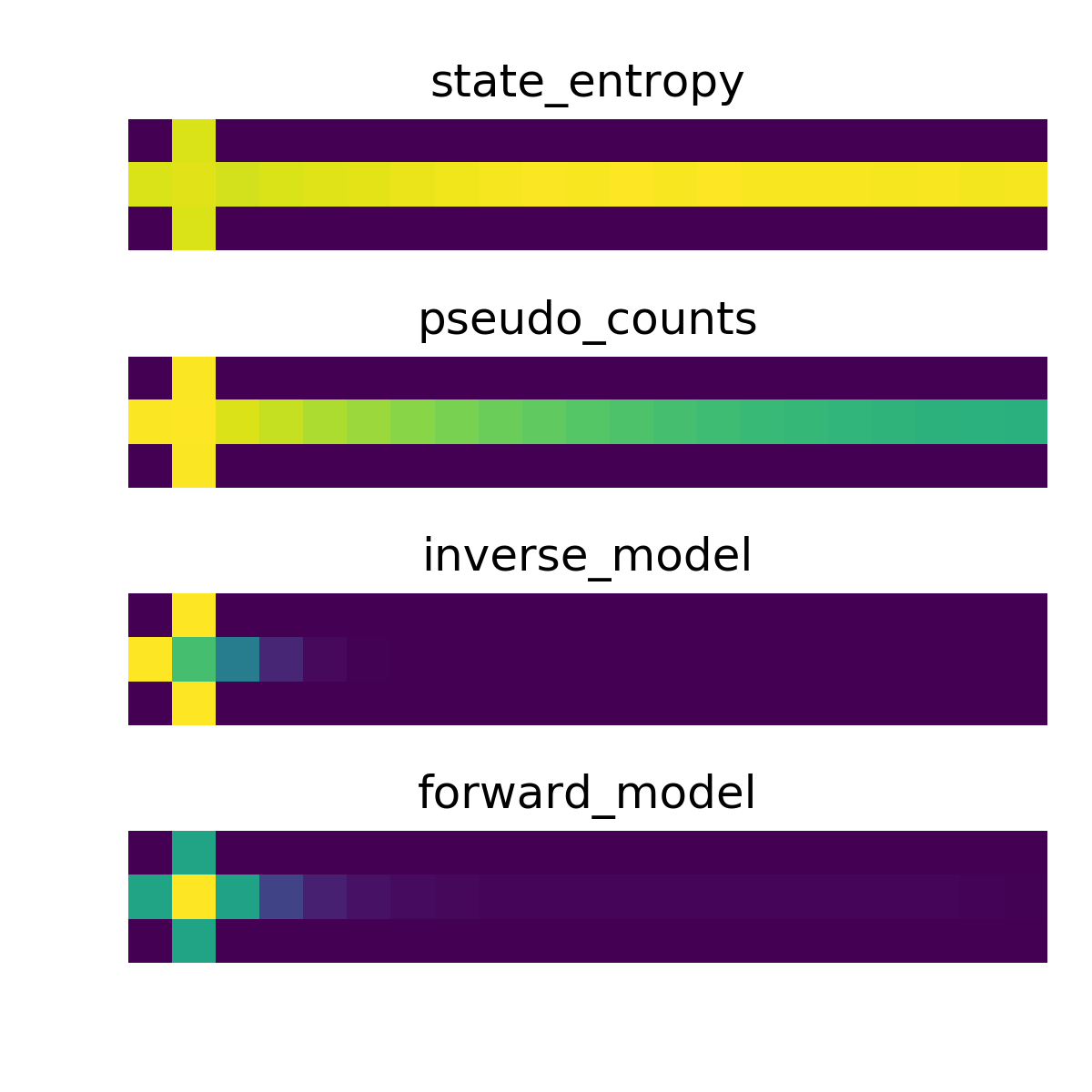}
        \vspace{-2.5em}
        \caption{State Marginals\label{fig:marginals}}
    \end{subfigure}%
    \hfill
    \begin{subfigure}[b]{0.5\linewidth}
        \centering
        \includegraphics[width=\linewidth,trim=120pt 40pt 40pt 0,clip]{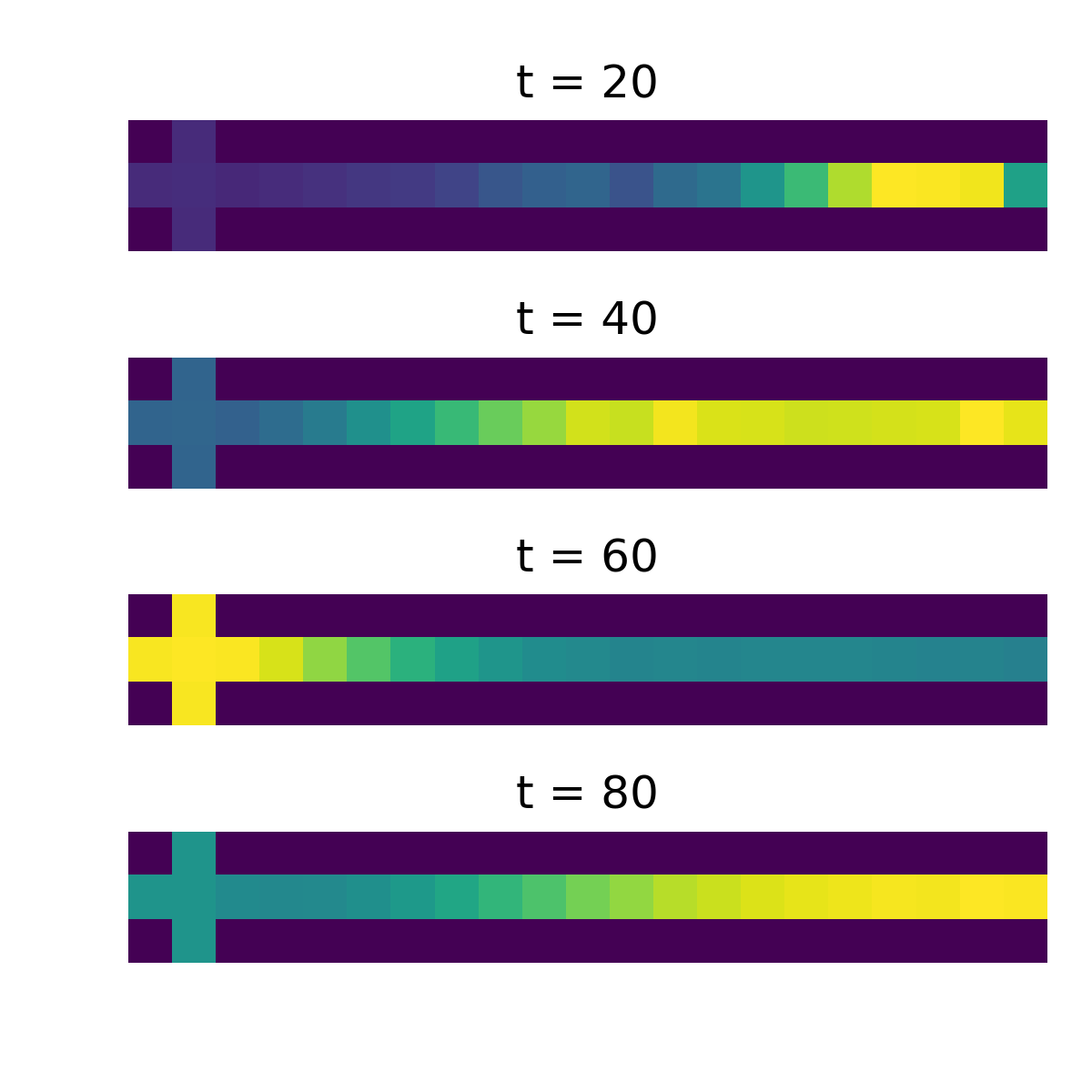}
        \vspace{-2.5em}
        \caption{Oscillatory Learning\label{fig:gridworld-oscillation}}
    \end{subfigure}%
    \vspace{-1em}
    \caption{\figleft \; State Marginals of various exploration methods. \figright \; Without historical averaging, two-player games exhibit oscillatory learning dynamics.}
    \vspace{-0.5em}
\end{figure}

\begin{figure}[t]
    \centering
    \includegraphics[width=\linewidth,trim=20pt 14pt 14pt 14pt,clip]{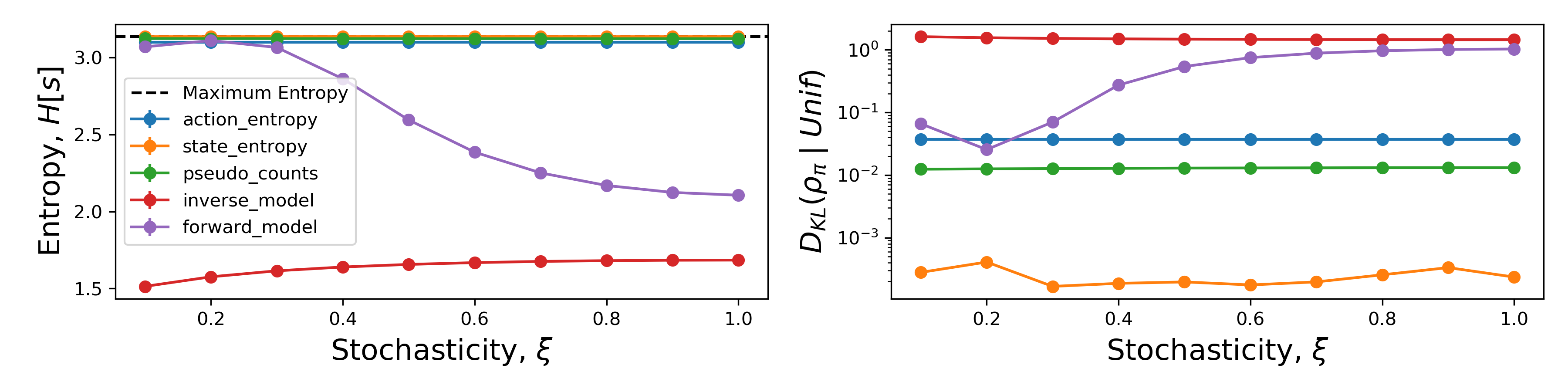}
    \vspace{-2em}
    \caption{\textbf{Effect of Environment Stochasticity on Exploration}: We record the amount of exploration in the didactic gridworld environment as we increase the stochasticity of the dynamics. Both subplots were obtained from the same trajectory data.}
    \label{fig:stochasticity}
    \vspace{-1.5em}
\end{figure}

In this section, we build intuition for why SMM is an important improvement on top of existing exploration methods, and why historical averaging is an important ingredient in maximizing the SMM objective. We will consider the gridworld shown in Fig~\ref{fig:marginals}. In each state, the agent can move up/down/left/right. In most states the commanded action is taken with probability 0.1; otherwise a random action is taken. The exception is a ``noisy TV'' state at the intersection of the two hallways, where the stochasticity is governed by a hyperparameter $\xi \in [0, 1]$. The motivation for considering this simple environment is that we can perform value iteration and learn forward/inverse/density models exactly, allowing us to observe the behavior of exploration strategies in the absence of function approximation error.

In our first experiment, we examine the asymptotic behavior of four methods: SMM (state entropy), inverse models, forward models, count-based exploration, and MaxEnt RL (action entropy). Fig.~\ref{fig:marginals} shows that while SMM converges to a uniform distribution over states, other exploration methods are biased towards visiting the stochastic state on the left. To further understand this behavior, we vary the stochasticity of this state and plot the marginal state entropy of each method, which we compute exactly via the power method. Fig.~\ref{fig:stochasticity} shows that SMM achieves high state entropy in all environments, whereas the marginal state entropy of the inverse model \emph{decreases} as the environment stochasticity increases. The other methods fail to achieve high state entropy for all environments.

Our second experiment examines the role of historical averaging (HA). Without HA, we would expect that exploration methods involving a two-player game, such as SMM and predictive-error exploration, would exhibit oscillatory learning dynamics. Fig.~\ref{fig:gridworld-oscillation} demonstrates this: without HA, the policy player and density player alternate in taking actions towards and placing probability mass on the left and right halves of the environment. Recalling that Fig.~\ref{fig:marginals} included HA for SMM, we conclude that HA is an important ingredient for preventing oscillatory learning dynamics.

In summary, this didactic experiment illustrates that prior methods fail to perform uniform exploration, and that historical averaging is important for preventing oscillation. Our next experiments will show that SMM also accelerates exploration on complex, high-dimensional tasks.

\section{Experimental Evaluation}

\begin{figure}[t]
\centering
    \begin{subfigure}[b]{0.5\linewidth}
        \centering
        \includegraphics[height=2cm]{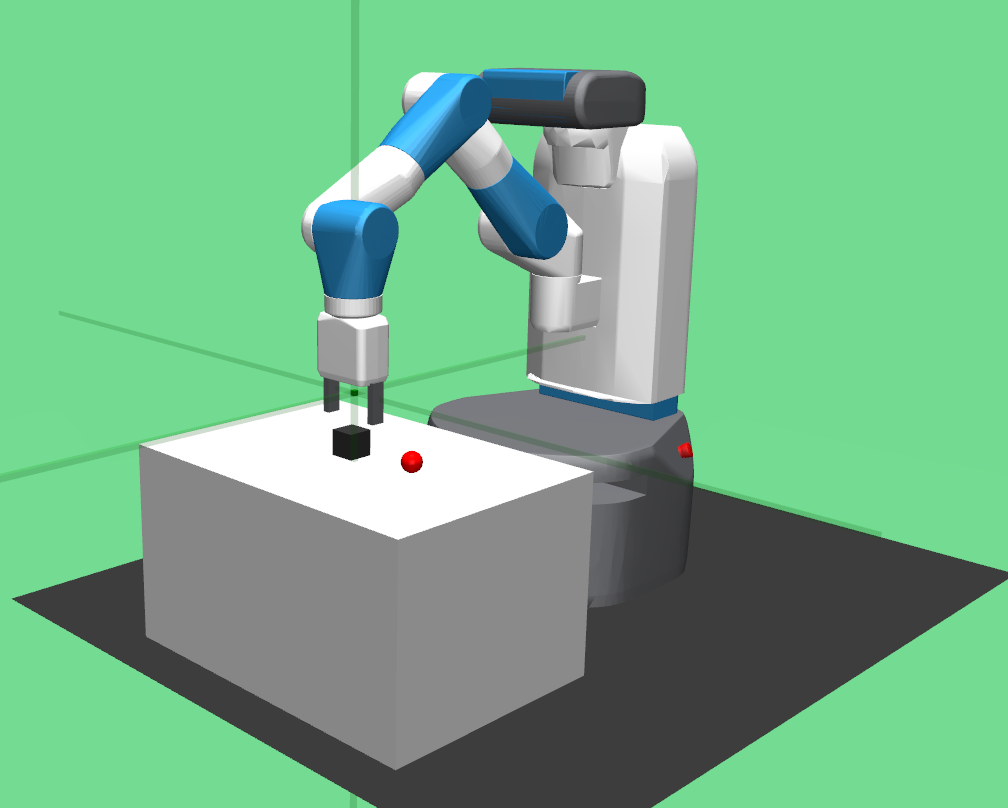}
        \caption{\emph{Fetch} environment\label{fig:fetch}}
    \end{subfigure}%
    \hfill
    \begin{subfigure}[b]{0.5\linewidth}
        \centering
        \includegraphics[height=2cm]{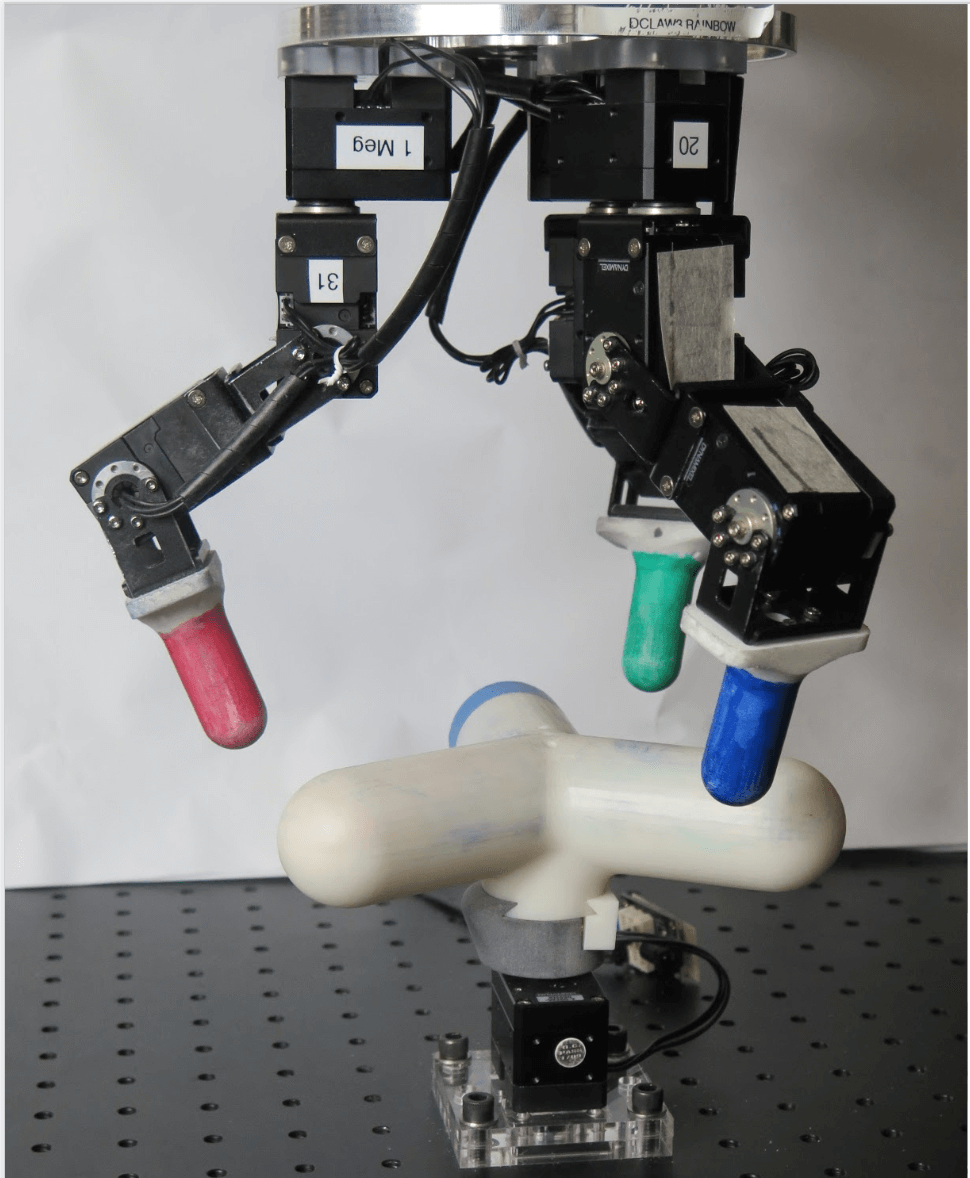}
        \caption{\emph{D'Claw} robot\label{fig:dclaw}}
    \end{subfigure}%
    \vspace{-1em}
    \caption{Simulated and real-world manipulation environments.}
    \vspace{-1.5em}
\end{figure}

In this section, we empirically study whether our method learns to explore effectively when scaled to more complex RL benchmarks, 
and compare against prior exploration methods. Our experiments demonstrate how State Marginal Matching provides good exploration, a key component of which is the historical averaging step.

\textbf{Environments}: We ran manipulation experiments in both a simulated \emph{Fetch} environment~\citep{plappert2018multi} consisting of a single gripper arm and a block object on top of the table (Fig.~\ref{fig:fetch}), as well as on a real-world \emph{D'Claw}~\citep{ahn2019robel} robot, which is a 3-fingered hand positioned vertically above a handle that it can turn (Fig.~\ref{fig:dclaw}).
In the \emph{Fetch} environment~\citep{plappert2018multi}, we defined the target distribution to be uniform over the entire state space (joint + block configuration), with the constraints that we put low probability mass on states where the block has fallen off the table; that actions should be small; and that the arm should be close to the object. For all experiments on the \emph{D'Claw} robot, we used a target distribution that places uniform mass over all object angles~$[-180^\circ, 180^\circ]$.

\begin{figure}[t]
\centering
    \centering
    \includegraphics[width=\columnwidth,trim=0 0 0 40pt,clip]{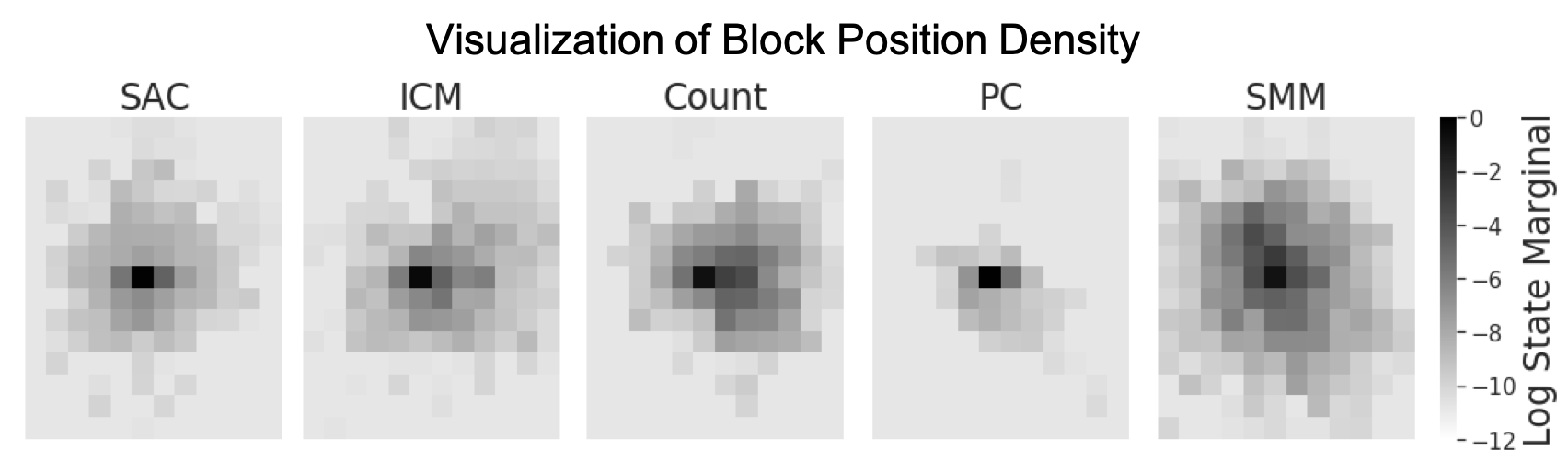}
    \vspace{-2.5em}
\caption{After training, we visualize the policy's log state marginal over the object coordinates in \emph{Fetch}. %
SMM achieves wider state coverage than baselines.\label{fig:visualization-algos}}
\vspace{-1.3em}
\end{figure}

\begin{figure*}
    \centering
    \vspace{-0.3em}
    \begin{subfigure}[b]{0.29\textwidth}
    \includegraphics[align=c,width=\textwidth]{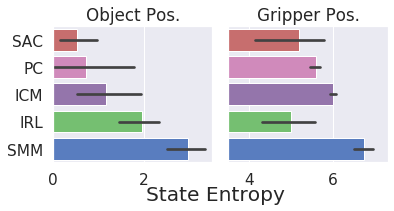}
    \vspace{-0.2em}
    \caption{\emph{Fetch} environment\label{fig:simulated-manipulation-state-entropy}}
    \end{subfigure}
    \begin{subfigure}[b]{0.35\textwidth}
    \centering
    \includegraphics[align=c,width=0.52\textwidth]{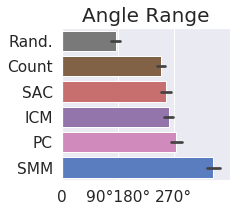}
    \includegraphics[align=c,width=0.46\textwidth]{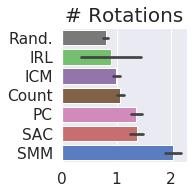}
    \vspace{-0.2em}
    \caption{Sim2Real on \emph{D'Claw}\label{fig:dclaw-min-max-angle}}
    \end{subfigure}%
    \begin{subfigure}[b]{0.35\textwidth}
    \includegraphics[align=c,width=\textwidth]{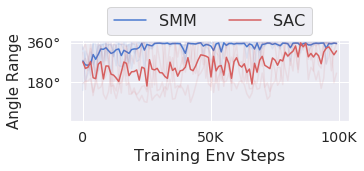}
    \vspace{-0.2em}
    \caption{Training on Hardware (\emph{D'Claw})\label{fig:dclaw-angle}}
    \end{subfigure}
    \vspace{-1em}
\caption{\textbf{The Exploration of SMM}:
\textbf{(a)}\; In the \emph{Fetch} environment, we plot the policy's state entropy over the object and gripper coordinates, averaged over 1,000 epochs. %
SMM explores more than baselines, as indicated by the larger state entropy (larger is better).
\textbf{(b)} \; In the \emph{D'Claw} environment, we trained policies in simulation and then observed how far the trained policy rotated the knob on the hardware robot, measuring both the total number of rotations and the minimum and maximum valve rotations. SMM turns the knob further to the left and right than the baselines, and also completes a larger cumulative number of rotations. \textbf{(c)}\; We trained SAC and SMM on the real robot for 1e5 environment steps (about 9 hours in real time), and measured the angle turned throughout training. We see that SMM moves the knob more and visits a wider range of states than SAC. All results are averaged over 4-5 seeds.}
\end{figure*}

\textbf{Baselines}: We compare to a state-of-the-art off-policy MaxEnt RL algorithm, Soft Actor-Critic (SAC)~\citep{haarnoja2018soft}; an inverse RL algorithm, Generative Adversarial Imitation Learning (GAIL)~\citep{ho2016generative}; and three exploration methods:
\vspace{-1em}
\begin{itemize}[noitemsep, leftmargin=*]
    \item Count-based Exploration (Count), which discretizes states and uses $-\log \hat{\pi}(s)$ as an exploration bonus.
    \item Pseudo-counts (PC)~\citep{bellemare2016unifying}, which uses the recoding probability as a bonus.
    \item Intrinsic Curiosity Module (ICM)~\citep{pathak2017curiosity}, which uses prediction error as a bonus.
\end{itemize}
\vspace{-0.5em}
All exploration methods have access to exactly the same information and the same extrinsic reward function. SMM interprets this extrinsic reward as the log probability of a target distribution: $p^*(s) \propto \exp(r_\text{env}(s))$.  We used SAC as the base RL algorithm for all exploration methods (SMM, Count, PC, ICM). We use a variational autoencoder (VAE) to model the density $q(s)$ for both SMM and Pseudocounts. For SMM, we approximate the historical average of density models (Eq.~\ref{eq:density-2}) with the most recent iterate, and use a uniform categorical distribution for the prior $p(z)$. %
To train GAIL, we generated synthetic expert data by sampling expert states from the target distribution $p^*(s)$ (see Appendix~\ref{section:gail} for details). 
Results for all experiments are averaged over 4-5 random seeds. Additional details about the experimental setup can be found in Appendix~\ref{appendix:implementation-details}.

\vspace{-0.5em}
\subsection{State Coverage at Convergence}

In the \emph{Fetch} environment, we trained each method for 1e6 environment steps and then measured how well they explore by computing the marginal state entropy, which we compute by discretizing the state space.\footnote{Discretization is used only for evaluation, no policy has access to it (except for Count).} In Fig.~\ref{fig:simulated-manipulation-state-entropy}, we see that SMM maximizes state entropy at least as effectively as prior methods, if not better. While this comparison is somewhat unfair, as we measure exploration using the objective that SMM maximizes, none of the methods we compare against propose an alternative metric for exploration.%

On the \emph{D'Claw} robot, we trained SMM and other baselines in simulation, and then evaluated the acquired exploration policy on the real robot using two metrics: the total number of rotations (in either direction), and the maximum radians turned (in both directions). For each method, we computed the average metric across 100 evaluation episodes. We repeated this process for 5 independent training runs. Compared to the baselines, SMM turns the knob more to a wider range of angles (Fig.~\ref{fig:dclaw-min-max-angle}). To test for statistical significance, we used a 1-sided Student's t-test to test the hypothesis that SMM turned the knob more to a wider range of angles than SAC. The p-values were all less than 0.05: $p = 0.046$ for number of rotations, $p = 0.019$ for maximum clockwise angle, and $p = 0.001$  for maximum counter-clockwise angle. The results on the \emph{D'Claw} hardware robot suggests that exploration techniques may actually be useful in the real world, which may encourage future work to study exploration methods on real-world tasks.

We also investigated whether it was possible to learn an exploration policy directly in the real world, without the need for a simulator, an important setting in scenarios where faithful simulators are hard to build.
In Fig.~\ref{fig:dclaw-angle}, we plot the range of angles that the policy explores \emph{throughout} training. Not only does SMM explore a wider range of angles than SAC, but its ability to explore increases throughout training, suggesting that the SMM objective is correlated with real-world metrics of exploration.

\vspace{-0.5em}
\subsection{Test-time Exploration}
\vspace{-0.5em}

\begin{figure}
    \centering
    \begin{subfigure}[b]{0.48\columnwidth}
        \includegraphics[width=\textwidth]{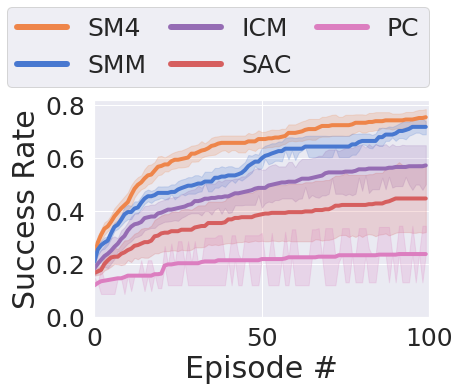}
    \vspace{-1.5em}
        \caption{Test-time adaptation\label{fig:fetch-test-algos}}
    \end{subfigure}
    \begin{subfigure}[b]{0.51\columnwidth}
    \includegraphics[width=\textwidth,trim=150pt 0 0 0,clip]{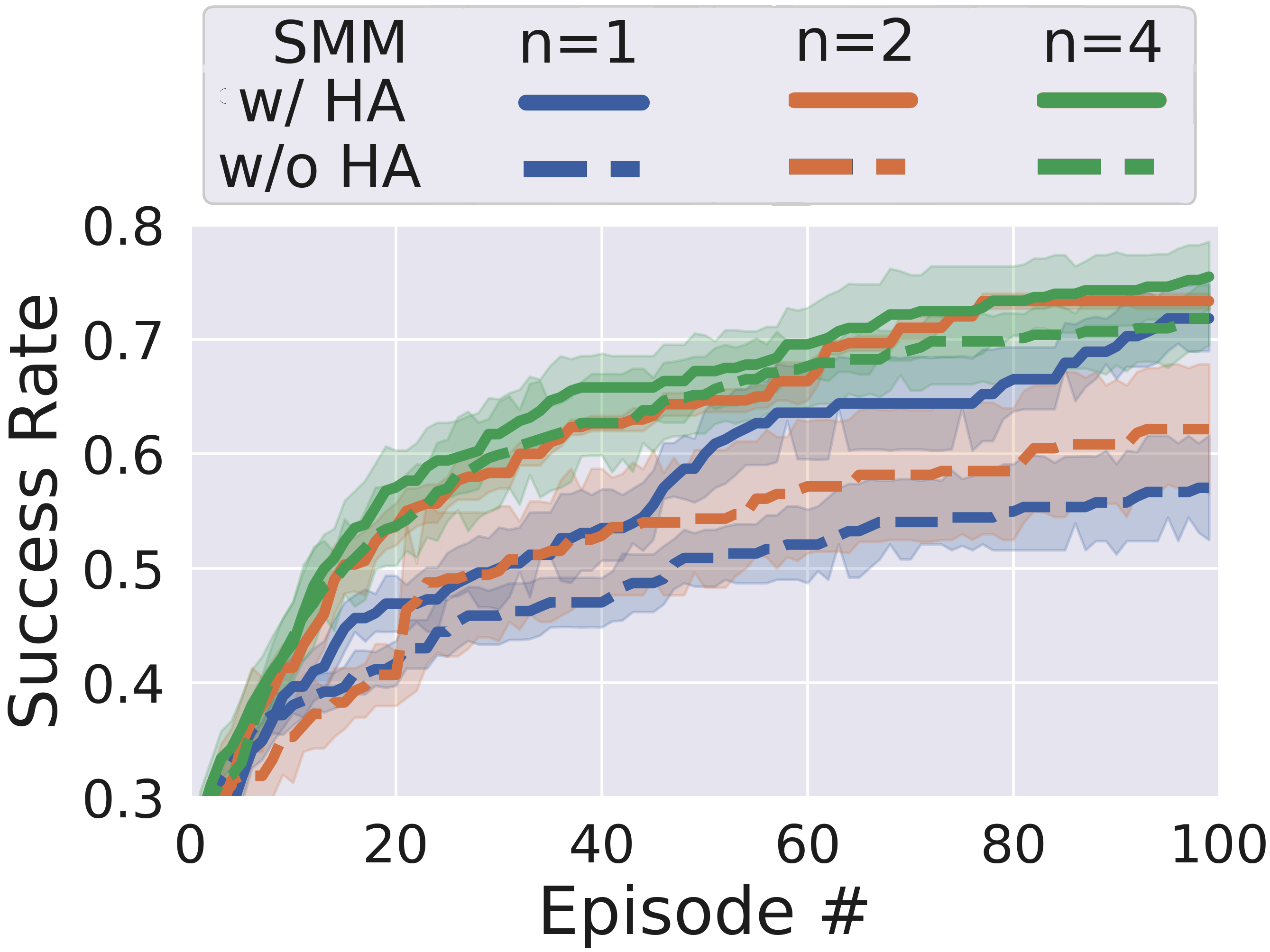}
    \vspace{-1em}
    \caption{SMM ablation\label{fig:manipulation-mixture}}
    \end{subfigure}%
    \vspace{-1.1em}
        \caption{\textbf{Fast Adaptation}: \textbf{(a)}\; We plot the percentage of test-time goals found within $N$ episodes. SMM and its mixture-model variant SM4 both explore faster than the baselines, allowing it to successfully find the goal in fewer episodes. \textbf{(b)}\; We compare SMM/SM4 with different numbers of mixtures, and with vs. without historical averaging. Increasing the number of latent mixture components $n \in \{1, 2, 4\}$ accelerates exploration, as does historical averaging. Error bars show std.\ dev.\ across 4 random seeds.\label{fig:fetch-test}}
    \vspace{-1.7em}
\end{figure}

We also evaluated whether the exploration policy acquired by SMM allows us to solve downstream tasks more quickly. 
As shown in Fig.~\ref{fig:fetch-test-algos}, SMM and its mixture variant, SM4, both adapt substantially more quickly than other exploration methods, achieving a success rate 20\% higher than the next best method, and reaching the same level of performance of the next baseline (ICM) in 4x fewer episodes.

\textbf{Ablation Study}.\; In Fig.~\ref{fig:manipulation-mixture}, we study the effect of mixture modelling on test-time exploration. %
After running SMM/SM4 with a uniform distribution, we count the number of episodes required to find an (unknown) goal state. We run each method for the same number of environment transitions; a mixture of three policies \emph{does not} get to take three times more transitions. We find that increasing the number of mixture components increases the agents success. However, the effect was smaller when using historical averaging. Taken together, this result suggests that efficient exploration requires \emph{either} historical averaging \emph{or} mixture modelling, but might not need both. In particular, SMM without historical averaging attains similar performance as the next best baseline (ICM), suggesting that historical averaging is the key ingredient, while the particular choice of prediction error or VAE is less important.

\vspace{-0.5em}
\section{Related Work}
\vspace{-0.5em}

Many exploration algorithms can be classified by whether they explore in the space of actions, policy parameters, goals, or states.
Common exploration strategies including $\epsilon$-greedy and Ornstein–Uhlenbeck noise~\citep{lillicrap2015continuous}, and MaxEnt RL algorithms~\citep{ziebart2010modeling,haarnoja2018soft} explore in the action space. \citet{fortunato2017noisy, plappert2017parameter} show that adding parameter noise to the policy can result in good exploration.

Most closely related to our work are methods that perform exploration in the space of states or goals~\citep{colas2018curious,held2017automatic,nair2018visual,pong2019skew, hazan2018provably}. While the state marginal matching objective is also considered in~\citet{hazan2018provably}, our work builds upon this prior work in a number of dimensions. First, we explain how to do distribution matching properly by analyzing the SMM objective as a two-player game and applying historical averaging from fictitious play. Our analysis also leads to a unified view of a large class of existing intrinsic motivation techniques that previously were proposed as exploration heuristics, showing that in fact these methods \emph{almost} perform state marginal matching. Furthermore, we introduce the notion that this objective yields a task-agnostic ``policy prior'' that can quickly solve new tasks, and demonstrate this empirically on complex RL benchmarks. We also prove that the SMM objective induces the optimal exploration for a certain class of goal-reaching tasks (Appendix~\ref{appendix:hitting-time}).

One class of exploration methods uses prediction error of some auxiliary task as an exploration bonus, which provides high (intrinsic) reward in states where the predictive model performs poorly~\citep{pathak2017curiosity,oudeyer2007intrinsic,schmidhuber1991possibility,houthooft2016vime,burda2018exploration}. Another set of approaches~\citep{tang2017exploration,bellemare2016unifying,schmidhuber2010formal} directly encourage the agent to visit novel states. While all methods effectively explore during the course of solving a single task~\citep{taiga2019benchmarking}, we showed in Sec.~\ref{sec:prediction-error} that the policy obtained at convergence is often not a good exploration policy by itself. In contrast, our method converges to a highly-exploratory policy by maximizing state entropy. %

The problems of exploration and meta-RL are tightly coupled. 
Meta-RL algorithms~\citep{duan2016rl, finn2017model, rakelly2019efficient, mishra2017simple} must perform effective exploration if they hope to solve a downstream task. Some prior work has explicitly looked at the problem of learning to explore~\citep{gupta2018meta, xu2018learning}. 
Our problem statement is similar to meta-learning, in that we also aim to learn a policy as a prior for solving downstream tasks. However, whereas meta-RL requires a distribution of task reward functions, our method requires only a single target state marginal distribution. Due to the simpler problem assumptions and training procedure, our method may be easier to apply in real-world domains.

Related to our approach are maximum \emph{action} entropy algorithms~\citep{haarnoja2018soft, kappen2012optimal, rawlik2013stochastic, ziebart2008maximum, theodorou2012relative}.
While these algorithms are referred to as \emph{MaxEnt RL}, they are maximizing entropy over actions, not states. These algorithms can be viewed as performing inference on a graphical model where the likelihood of a trajectory is given by its exponentiated reward~\citep{toussaint2006probabilistic,levine2018reinforcement,abdolmaleki2018maximum}.
While distributions over trajectories induce distributions over states, computing the exact relationship requires integrating over all possible trajectories, an intractable problem for most MDPs.
A related but distinct class of \emph{relative} entropy methods use a similar entropy-based objective to limit the size of policy updates~\citep{peters2010relative, schulman2015trust}.

Many of the underlying ingredients of our method, such as adversarial games and density estimation, have seen recent progress in imitation learning~\citep{ziebart2008maximum, ho2016generative, finn2016guided, fu2017learning}. Similar to some inverse RL algorithms~\citep{ho2016generative,fu2018learning}, our method iterates between learning a policy and learning a reward function, though our reward function is obtained via a density model instead of a discriminator. While inverse RL algorithms assume access to expert trajectories, we instead assume access to the density of the target state marginal distribution. In many realistic settings, such as robotic control with many degrees of freedom, providing fully-specified trajectories may be much more challenging than defining a target state marginal distribution. The latter only requires some aggregate statistics about expert behavior, and does not even need to be realizable by any policy.

\vspace{-0.5em}
\section{Conclusion}

This paper studied state marginal matching as a formal objective for exploration. %
While it is often unclear what existing exploration methods will converge to,
the SMM objective has a clear solution: at convergence, the policy should visit states in proportion to their density under a target distribution.
The resulting policy can be used as a prior in a multi-task setting to amortize exploration and adapt more quickly to new, potentially sparse, reward functions. 

We explain how to perform distribution matching properly via historical averaging. We further demonstrate that prior work approximately maximizes the SMM objective, offering an explanation for the success of these methods. 
Augmenting these prior methods with an important historical averaging step not only guarantees that they converge, but also empirically improves their exploration. 
Experiments on both simulated and real-world tasks demonstrated how SMM learns to explore, enabling an agent to efficiently explore in new tasks provided at test time. 

In summary, our work unifies prior exploration methods as performing approximate distribution matching, and explains how state distribution matching can be performed properly. This perspective provides a clearer picture of exploration, and is useful particularly because many of the underlying ingredients, such as adversarial games and density estimation, have seen recent progress and therefore might be adopted to improve exploration methods.

\paragraph{Acknowledgements}
{\footnotesize
We thank Michael Ahn for help with running experiments on the \emph{D'Claw} robots. We thank Maruan Al-Shedivat, Danijar Hafner, and Ojash Neopane for helpful discussions and comments. LL is supported by NSF grant DGE-1745016 and AFRL contract FA8702-15-D-0002. BE is supported by Google. EP is supported by ONR grant N000141812861 and Apple. RS is supported by NSF grant IIS1763562, ONR grant N000141812861, AFRL CogDeCON, and Apple. Any opinions, findings and conclusions or recommendations expressed in this material are those of the author(s) and do not necessarily reflect the views of NSF, AFRL, ONR, Google or Apple. We also thank Nvidia for their GPU support. 
}

\clearpage
\appendix

\begin{figure}[t]
\centering
 \begin{algorithm}[H]
    \caption{State Marginal Matching with Mixtures of Mixtures (SM4) \label{alg:smm-mop}}
  \begin{algorithmic}
        \STATE \textbf{Input:} Target distribution $p^*(s)$
        \STATE Initialize policy $\pi_z(a \mid s)$, density model $q_z(s)$, discriminator $d(z \mid s)$, and replay buffer $\mathcal{B}$.
\WHILE{not converged}
\FOR{$z = 1, \cdots, n$} %
    \STATE $q_z^{(m)} \gets \argmax_q \mathbb{E}_{\{s \mid (z', s) \sim \mathcal{B}^{(m-1)},z' = z\}} \left[ \log q(s) \right]$
\ENDFOR
\STATE $d^{(m)} \gets \argmax_d \mathbb{E}_{(z, s) \sim \mathcal{B}^{(m-1)}} \left[ \log d(z \mid s) \right]$ \COMMENT{(2) Update discriminator.}
\FOR{$z = 1, \cdots, n$} 
    \STATE $r_z^{(m)}(s) \triangleq \log p^*(s) -\log q^{(m)}_z(s) + \log d^{(m)}(z \mid s) - \log p(z)$
    \STATE $\pi_z^{(m)} \gets \argmax_\pi \E_{\rho_{\pi}(s)} \left[ r_z^{(m)}(s) \right]$ %
\ENDFOR
\STATE Sample latent skill $z^{(m)} \sim p(z)$
\STATE Sample transitions $\{(s_t, a_t, s_{t+1})\}_{t=1}^T$ with $\pi_z^{(m)}(a \mid s)$ %
\STATE $\mathcal{B}^{(m)} \gets \mathcal{B}^{(m-1)} \cup \{ (z^{(m)}, s_t,a_t,s_{t+1}) \}_{t=1}^T$
\ENDWHILE
\STATE \textbf{return} $\{\{\pi_1^{(1)}, \cdots, \pi_n^{(1)}\}, \cdots, \{\pi_1^{(m)}, \cdots, \pi_n^{(m)}\}\}$
\end{algorithmic} 
\end{algorithm}
\vspace{-1em}
\caption*{Alg.~\ref{alg:smm-mop}.
{\small An algorithm for learning a \emph{mixture} of policies $\pi_1, \pi_2, \cdots, \pi_n$ that do state marginal matching \emph{in aggregate}. The algorithm (1) fits a density model $q^{(m)}_z(s)$ to approximate the state marginal distribution for each policy $\pi_z$; (2) learns a discriminator $d^{(m)}(z \mid s)$ to predict which policy $\pi_z$ will visit state $s$; and (3) uses RL to update each policy $\pi_z$ to maximize the expected return of its corresponding reward function derived in Eq.~\ref{eq:smm-mop-objective}. %
In our implementation, the density model $q_z(s)$ is a VAE that inputs the concatenated vector $\{s, z\}$ of the state $s$ and the latent skill $z$ used to obtain this sample $s$; and the discriminator is a feedforward MLP.
The algorithm returns the historical average of mixtures of policies (a total of $n\cdot m$ policies).
}
}
\vspace{-1em}
\end{figure}

\section{Proofs}\label{sec:proofs}

\begin{proof}[Proof of Proposition~\ref{lemma:max-min-equivalence}]
Note that the objective in Eq.~\ref{eq:min-max-obj} can be written as
\begin{equation*}
    \E_{\rho_{\pi}(s)}[\log p^*(s) - \log \rho_{\pi}(s)] + \kl{\rho_{\pi}(s)}{q(s)}.
\end{equation*}
By Assumption~\ref{assumption-existence}, $\kl{\rho_{\pi}(s)}{q(s)} = 0$ for some $q \in Q$, so we obtain the desired result:
\begin{align*}
    &\max_\pi \left( \min_q \E_{\rho_{\pi}(s)}[\log p^*(s) -\log q(s)] \right)\\
    = &\max_\pi \big( \E_{\rho_{\pi}(s)}[\log p^*(s) - \log \rho_{\pi}(s)] + \\
    & \qquad\quad\min_q \kl{\rho_{\pi}(s)}{q(s)} \big) \\
    = &\max_\pi \E_{\rho_{\pi}(s)}[\log p^*(s) - \log \rho_{\pi}(s)]. \qedhere
\end{align*}
\end{proof}

\section{Choosing $p^*(s)$ for Goal-Reaching Tasks}
\label{appendix:hitting-time}

In general, the choice of the target distribution $p^*(s)$ will depend on the distribution of test-time tasks. In this section, we consider the special case where the test-time tasks correspond to goal-reaching derive the optimal target distribution $p^*(s)$.
We consider the setting where goals $g \sim p_g(g)$ are sampled from some known distribution. Our goal is to minimize the number of episodes required to reach that goal state. We define reaching the goal state as visiting a state that lies within an $\epsilon$ ball of the goal, where both $\epsilon > 0$ and the distance metric are known.

We start with a simple lemma that shows that the probability that we reach the goal at any state in a trajectory is at least the probability that we reach the goal at a randomly chosen state in that same trajectory. Defining the binary random variable $z_t \triangleq \mathbbm{1}(\|s_t - g\| \le \epsilon)$ as the event that the state at time $t$ reaches the goal state, we can formally state the claim as follows:
\begin{lemma} 
\begin{equation*}
    p \left(\sum_{t=1}^T z_t > 0 \right) \ge p(z_\mathbf{t}) \qquad \text{where} \quad \mathbf{t} \sim Unif[1, \cdots, H]
\end{equation*}
\label{lemma:1}
\end{lemma}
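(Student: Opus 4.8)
The plan is to prove the inequality by relating the event $\{\sum_{t=1}^T z_t > 0\}$ to the average of the individual events $\{z_t = 1\}$, using only elementary properties of probability and expectation. Writing $\mathbf{t} \sim \text{Unif}[1,\dots,T]$ for a uniformly random time index (independent of the trajectory), I first observe that
\begin{equation*}
    p(z_{\mathbf{t}} = 1) = \E_{\mathbf{t}}\big[ p(z_{\mathbf{t}} = 1 \mid \mathbf{t}) \big] = \frac{1}{T} \sum_{t=1}^T p(z_t = 1) = \E\left[ \frac{1}{T} \sum_{t=1}^T z_t \right],
\end{equation*}
so the right-hand side is exactly the expected fraction of timesteps in the trajectory at which the goal is reached.

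Next I would bound this expected fraction by the probability that the fraction is positive. Since each $z_t \in \{0,1\}$, we have $\frac{1}{T}\sum_{t=1}^T z_t \le 1$ always, and moreover $\frac{1}{T}\sum_{t=1}^T z_t \le \mathbbm{1}\big(\sum_{t=1}^T z_t > 0\big)$ pointwise: when no $z_t$ fires both sides are $0$, and when at least one fires the left side is at most $1$ while the right side equals $1$. Taking expectations of this pointwise inequality gives
\begin{equation*}
    \E\left[ \frac{1}{T} \sum_{t=1}^T z_t \right] \le \E\left[ \mathbbm{1}\Big(\sum_{t=1}^T z_t > 0\Big) \right] = p\left( \sum_{t=1}^T z_t > 0 \right),
\end{equation*}
which combined with the first display is precisely the claimed inequality. (I note the statement uses $H$ in the subscript of the uniform distribution where it presumably means $T$, the episode length; I would just use $T$ consistently.)

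The argument is essentially a one-line observation once the right-hand side is rewritten as an expectation, so there is no real obstacle — the only thing to be careful about is the independence of $\mathbf{t}$ from the trajectory (so that the tower property gives the clean averaging identity) and the direction of the pointwise bound $\frac{1}{T}\sum z_t \le \mathbbm{1}(\sum z_t > 0)$. I would present it in exactly the two steps above: (i) rewrite $p(z_{\mathbf{t}})$ as the expected empirical frequency of goal-hits, and (ii) apply the pointwise domination of that frequency by the indicator of "hit at least once," then take expectations. No heavier machinery (union bounds, concentration, etc.) is needed.
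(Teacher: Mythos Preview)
Your proof is correct. The paper's own argument is a shade more direct: it works on the joint probability space (where $\mathbf{t}$ is itself random) and simply notes the implication $z_{\mathbf{t}}=1 \Rightarrow \sum_{t=1}^T z_t > 0$, then invokes monotonicity of probability for the containment $\{z_{\mathbf{t}}=1\}\subseteq\{\sum_t z_t>0\}$. You instead marginalize out $\mathbf{t}$ first, rewriting $p(z_{\mathbf{t}}=1)$ as the expected empirical hit-frequency $\E\big[\tfrac{1}{T}\sum_t z_t\big]$, and then bound that pointwise by the indicator $\mathbbm{1}(\sum_t z_t>0)$. Both routes are elementary and equally valid; the paper's is one line, while yours has the advantage of making explicit the interpretation of $p(z_{\mathbf{t}})$ as the average per-step hit probability---which is precisely the quantity that feeds into the subsequent hitting-time bound.
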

\begin{proof}
We start by noting the following implication:
\begin{equation*}
z_{\mathbf{t}} = 1 \implies \sum_{t=1}^T z_t > 0
\end{equation*}
Thus, the probability of the event on the RHS must be at least as large as the probability of the event on the LHS:
\begin{equation*}
   p(z_\mathbf{t}) \le p \left(\sum_{t=1}^T z_t > 0 \right)
\end{equation*}
\end{proof}
Next, we look at the expected number of \emph{episodes} to reach the goal state. Since each episode is independent, the expected hitting time is simply
\begin{align*}
   \textsc{HittingTime}(s)
   &= \frac{1}{p(\text{some state reaches $s$})} \\
   &= \frac{1}{p \left(\sum_{t=1}^T z_t > 0 \right)} \le \frac{1}{p(z_\mathbf{t})}
\end{align*}
Note that we have upper-bounded the hitting time using Lemma~\ref{lemma:1}.
Since the goal $g$ is a random variable, we take an expectation over $g$:
\begin{align*}
    &\E_{s \sim p_g(s)}\left[\textsc{HittingTime}(s)\right]
    \le \E_{s \sim p_g(s)}\left[ \frac{1}{p(z_\mathbf{t})}\right] \\
    &\quad\le \E_{s \sim p_g(s)}\left[ \frac{1}{\int p^*(\tilde{s}) \mathbbm{1}(\|s - \tilde{s}\| \le \epsilon)d \tilde{s}}\right] \triangleq \mathcal{F}(p^*)
\end{align*}
where $p^*(s)$ denotes the target state marginal distribution. 
We will minimize $\mathcal{F}$, an upper bound on the expected hitting time.
\begin{lemma}
The state marginal distribution $p^*(s) \propto \sqrt{\tilde{p}(s)}$ minimizes $\mathcal{F}(p^*)$, where \mbox{$\tilde{p}(s) \triangleq \int p_g(\tilde{s}) \mathbbm{1}(\|s - \tilde{s}\| \le \epsilon) d\tilde{s}$} is a smoothed version of the target density.
\label{lemma-2}
\end{lemma}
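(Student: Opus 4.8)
The plan is to minimize the functional
\[
\mathcal{F}(p^*) = \E_{s \sim p_g(s)}\left[ \frac{1}{\int p^*(\tilde{s}) \mathbbm{1}(\|s - \tilde{s}\| \le \epsilon) \, d\tilde{s}} \right]
\]
over all probability densities $p^*$, subject to the normalization constraint $\int p^*(s)\, ds = 1$ and nonnegativity. First I would rewrite the inner integral in terms of the smoothing operation: if we let $S_\epsilon(s) = \int p^*(\tilde s)\mathbbm{1}(\|s-\tilde s\|\le\epsilon)\,d\tilde s$ denote the ball-average of $p^*$, then $\mathcal{F}(p^*) = \int p_g(s) / S_\epsilon(s)\, ds$. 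The key observation is that $\mathbbm{1}(\|s-\tilde s\|\le\epsilon)$ is a symmetric kernel, so by Fubini/Tonelli we can move the smoothing onto $p_g$: namely $\int p_g(s)\, \phi(s)\, ds = \int p^*(\tilde s)\, \tilde\phi(\tilde s)\, d\tilde s$ where $\tilde\phi$ is the ball-average of $\phi$. This is exactly where $\tilde p(s) \triangleq \int p_g(\tilde s)\mathbbm{1}(\|s-\tilde s\|\le\epsilon)\,d\tilde s$ comes from.

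The cleanest route is Lagrange multipliers / calculus of variations. I would form the Lagrangian $\mathcal{L}(p^*, \lambda) = \int p_g(s)/S_\epsilon(s)\, ds + \lambda\left(\int p^*(\tilde s)\, d\tilde s - 1\right)$, take the functional derivative with respect to $p^*(\tilde s)$, and set it to zero. Since $\delta S_\epsilon(s)/\delta p^*(\tilde s) = \mathbbm{1}(\|s-\tilde s\|\le\epsilon)$, the stationarity condition reads
\[
-\int \frac{p_g(s)}{S_\epsilon(s)^2}\, \mathbbm{1}(\|s-\tilde s\|\le\epsilon)\, ds + \lambda = 0 \quad \text{for all } \tilde s.
\]
Now I would guess-and-verify the ansatz $p^*(s) \propto \sqrt{\tilde p(s)}$, equivalently $S_\epsilon(s) \propto \tilde p(s)$ — wait, that is not automatic, so the argument actually has to go the other way: one checks that with $p^*(s) = c\sqrt{\tilde p(s)}$ the induced $S_\epsilon$ satisfies the stationarity equation. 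Actually the honest way is to substitute and observe that the Euler–Lagrange equation, after the Fubini swap, becomes a self-consistency relation that the square-root profile solves; I would verify that $-\int p_g(s)\mathbbm{1}(\|s-\tilde s\|\le\epsilon)/S_\epsilon(s)^2\,ds$ is constant in $\tilde s$ precisely when $S_\epsilon(s)^2 \propto \tilde p(s)$, and then note this holds under the stated choice since $S_\epsilon$ applied to $c\sqrt{\tilde p}$ need not literally equal $c'\tilde p$ pointwise — so I'd instead present the slicker Cauchy–Schwarz argument below, which avoids the self-consistency subtlety entirely.

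The cleanest rigorous argument avoids variational calculus: apply the Cauchy–Schwarz inequality. Write
\[
1 = \int p_g(s)\, ds = \int \frac{\sqrt{p_g(s)}}{\sqrt{S_\epsilon(s)}} \cdot \sqrt{p_g(s)}\sqrt{S_\epsilon(s)}\, ds \le \left(\int \frac{p_g(s)}{S_\epsilon(s)}\, ds\right)^{1/2}\left(\int p_g(s)\, S_\epsilon(s)\, ds\right)^{1/2},
\]
so $\mathcal{F}(p^*) = \int p_g(s)/S_\epsilon(s)\,ds \ge 1 / \int p_g(s) S_\epsilon(s)\, ds$. Then by the Fubini swap, $\int p_g(s)\,S_\epsilon(s)\,ds = \int p^*(\tilde s)\,\tilde p(\tilde s)\,d\tilde s$, and bounding this by another application of Cauchy–Schwarz, $\int p^* \tilde p \le \left(\int (p^*)^2/\sqrt{\tilde p}\right)^{1/2}$ — no, more directly: subject to $\int p^* = 1$, Cauchy–Schwarz gives $\left(\int p^* \tilde p\right) \le$ something only if we also control a second moment, so instead I'd maximize $\int p^* \tilde p$ over densities, which is a linear functional — its max over the simplex is attained at a point mass, which is degenerate. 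This tells me the two Cauchy–Schwarz steps must be coupled: the right move is to apply Cauchy–Schwarz once, directly, to $\mathcal{F}$ in the form $\mathcal{F}(p^*)\cdot\big(\text{normalization of }p^*\big)$, chaining through the Fubini identity so that equality forces $p_g(s)/S_\epsilon(s)^2 = \text{const}\cdot$(the dual quantity), which pins down $S_\epsilon \propto \sqrt{\tilde p}\cdot(\dots)$ and hence, after unwinding the smoothing, $p^* \propto \sqrt{\tilde p}$. The main obstacle, and the step I expect to require the most care, is exactly this: getting the Fubini/symmetric-kernel bookkeeping right so that the single Cauchy–Schwarz application yields the clean lower bound $1/\big(\int p^*(s)\tilde p(s)\,ds\big)^{1/2}\cdot(\dots)$ whose equality case is $p^*(s)\propto\sqrt{\tilde p(s)}$, rather than accidentally producing a bound optimized by a degenerate distribution. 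Once the correct pairing is identified, verifying that $p^*\propto\sqrt{\tilde p}$ achieves equality (both in the Cauchy–Schwarz step and against the normalization constraint, fixing the constant $c = 1/\int\sqrt{\tilde p(s)}\,ds$) is a routine substitution.
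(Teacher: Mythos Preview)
Your Lagrangian setup is exactly the paper's approach, and the stationarity condition you write,
\[
-\int \frac{p_g(s)}{S_\epsilon(s)^2}\,\mathbbm{1}(\|s-\tilde s\|\le\epsilon)\,ds + \lambda = 0,
\]
is the correct functional derivative. The paper, however, writes ${p^*}^2(\tilde{s})$ in the denominator rather than $S_\epsilon(s)^2$ --- effectively replacing the ball-average $S_\epsilon(s)$ by the point value $p^*(\tilde{s})$ --- and then reads off $p^*(\tilde{s})\propto\sqrt{\tilde{p}(\tilde{s})}$ directly from that simplified equation, together with a positive second-derivative check for convexity. In other words, the paper's proof is precisely the Lagrangian calculation you began and then abandoned, with the self-consistency issue you flagged silently absorbed into a small-$\epsilon$ approximation (the discussion surrounding the lemma signals this by treating $\epsilon\to 0$ as the primary case).

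Your instinct that the self-consistency step is a genuine obstacle is correct: for general $\epsilon>0$, the smoothing operator applied to $\sqrt{\tilde p}$ need not be proportional to $\tilde p$, so the square-root profile does not exactly solve the true Euler--Lagrange equation. This is also why your Cauchy--Schwarz route keeps collapsing --- it is not that your chaining is wrong, but that the claimed minimizer is only approximate, so no exact inequality argument will land on it. If your goal is to match the paper, simply present the Lagrangian computation with the point-value substitution $S_\epsilon(s)\approx p^*(\tilde{s})\cdot\mathrm{vol}(B_\epsilon)$ and the convexity check, noting that the formula is exact in the limit $\epsilon\to 0$; if you want a rigorous statement for finite $\epsilon$, the lemma itself would need to be weakened.
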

Before presenting the proof, we provide a bit of intuition. In the case where $\epsilon \rightarrow 0$, the optimal target distribution is $p^*(s) \propto \sqrt{p_g(s)}$. For non-zero $\epsilon$, the policy in Lemma~\ref{lemma-2} is equivalent to convolving $p_g(s)$ with a box filter before taking the square root. In both cases, we see that the optimal policy does distribution matching to some function of the goal distribution. Note that $\tilde{p}(\cdot)$ may not sum to one and therefore is not a proper probability distribution.
\begin{proof}
We start by forming the Lagrangian:
\begin{align*}
    \mathcal{L}(p^*) &\triangleq \int \frac{p_g(s)}{\int p^*(\tilde{s}) \mathbbm{1}(\|s - \tilde{s}\| \le \epsilon) \; d \tilde{s}}\; ds \\
    &\qquad + \lambda \left(\int p^*(\tilde{s}) \; d \tilde{s} - 1 \right)
\end{align*}
The first derivative is
\begin{equation*}
    \frac{d \mathcal{L}}{d p^*(\tilde{s})} = \int \frac{-p_g(s)\mathbbm{1}(\|s - \tilde{s}\| \le \epsilon)}{{p^*}^2(\tilde{s})} ds + \lambda = 0
\end{equation*}
Note that the second derivative is positive, indicating that this Lagrangian is convex, so all stationary points must be global minima:
\begin{equation*}
    \frac{d^2 \mathcal{L}}{d p^*(\tilde{s})^2} = \int \frac{2p_g(s)\mathbbm{1}(\|s - \tilde{s}\| \le \epsilon)}{{p^*}^3(\tilde{s})} ds > 0
\end{equation*}
Setting the first derivative equal to zero and rearranging terms, we obtain
\begin{equation*}
    \pi(\tilde{s}) \propto \sqrt{\int p_g(s) \mathbbm{1}(\|s - \tilde{s}\| \le \epsilon) ds}
\end{equation*}
Renaming $\tilde{s} \leftrightarrow s$, we obtain the desired result.
\end{proof}

\subsection{Connections to Goal-Conditioned RL}
\label{appendix:rl-goals}
Goal-Conditioned RL~\citep{kaelbling1993learning,nair2018visual,held2017automatic} can be viewed as a special case of State Marginal Matching when the goal-sampling distribution is learned jointly with the policy.
In particular, consider the State Marginal Matching with a mixture policy (Alg.~\ref{alg:smm-mop}), where the mixture component $z$ maps bijectively to goal states. In this case, we learn goal-conditioned policies of the form $\pi(a \mid s, z)$.
Consider the SMM objective with Mixtures of Policies in Eq.~\ref{eq:smm-mop-objective}.
The second term $p(z \mid s)$ is an estimate of which goal the agent is trying to reach, similar to objectives in intent inference~\citep{ziebart2009planning,Xie_2013_ICCV}. The third term $\pi(s \mid z)$ is the distribution over states visited by the policy when attempting to reach goal $z$. For an optimal goal-conditioned policy in an infinite-horizon MDP, both of these terms are Dirac functions:
\begin{equation*}
    \pi(z \mid s) = \rho_\pi(s \mid z) = \mathbbm{1}(s = z)
\end{equation*}
In this setting, the State Marginal Matching objective simply says to sample goals $g \sim \pi(g)$ with probability equal to the density of that goal under the target distribution.
\begin{align*}
\kl{\rho_{\pi}(s)}{p^*(s)} = \E_{\substack{z \sim \pi(z)\\s \sim \pi(s \mid z)}} \left[\log p^*(s) - \log \pi(z) \right]
\end{align*}

Whether goal-conditioned RL is the preferable way to do distribution matching depends on (1) the difficulty of sampling goals and (2) the supervision that will be provided at test time. It is natural to use goal-conditioned RL in settings where it is easy to sample goals, such as when the space of goals is small and finite or otherwise low-dimensional. If a large collection of goals is available apriori, we could use importance sampling to generate goals to train the goal-conditioned policy~\citep{pong2019skew}.
However, many real-world settings have high-dimensional goals, which can be challenging to sample.
While goal-conditioned RL is likely the right approach when we will be given a test-time task, a latent-conditioned policy may explore better in settings where the goal-state is not provided at test-time.

\section{Additional Experiments}

\subsection{Navigation experiments}

\begin{figure*}
    \vspace{-0.5em}
    \centering
     \begin{subfigure}[b]{0.18\textwidth}
        \centering
        \includegraphics[align=c,width=\textwidth]{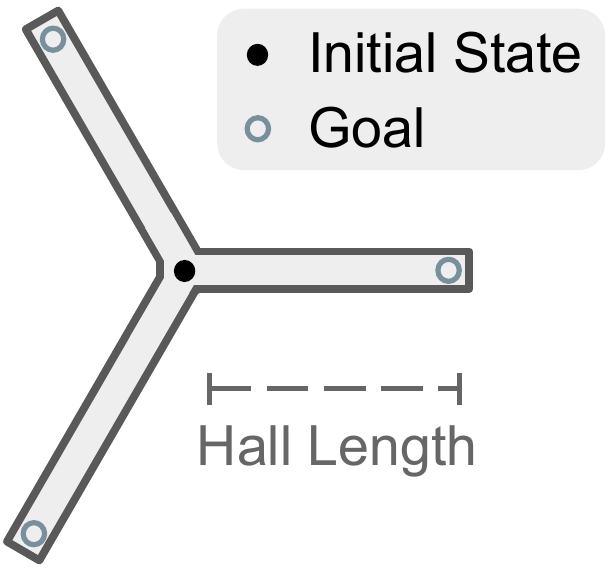}
        \caption{\label{fig:point-2d-env}\emph{Navigation} env.}
    \end{subfigure}%
    ~
    \begin{subfigure}[b]{0.09\textwidth}
        \centering
        \includegraphics[align=c,width=\textwidth,trim=0 0 0 0,clip]{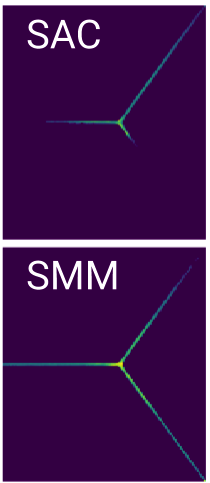}

        \tiny{3 Halls of length 10}
        \vspace{0.1pt}
        \caption{\label{fig:point-2d-density}$\rho_\pi(s)$}
    \end{subfigure}%
    ~
    \begin{subfigure}[b]{0.31\textwidth}
        \centering
        \includegraphics[align=c,width=\textwidth,trim=0 0 0 2pt,clip]{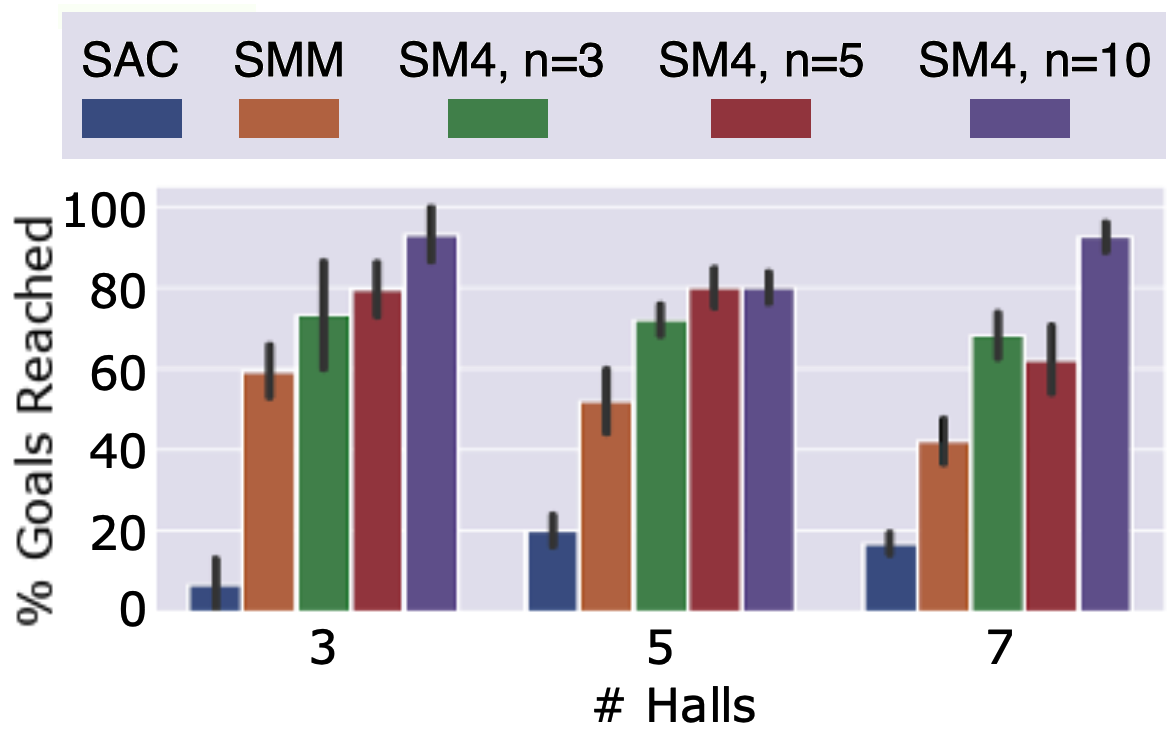}
        \vspace{-5pt}
        \caption{\label{fig:point-2d-results}\% Goals reached during training}
    \end{subfigure}%
    ~
    \begin{subfigure}[b]{0.38\textwidth}
        \centering
    \includegraphics[align=c,width=\textwidth]{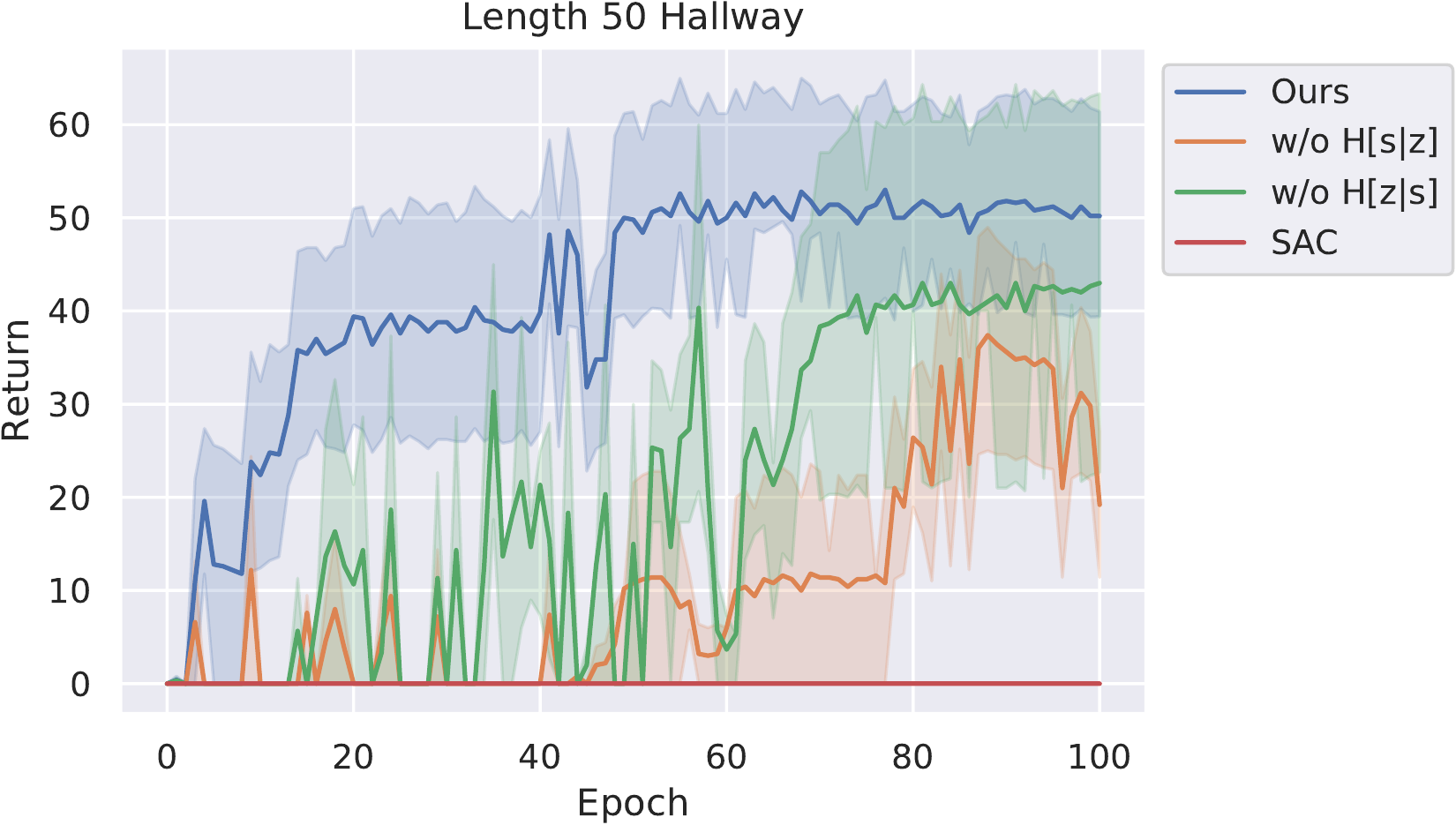}
    \caption{Train-time Performance\label{fig:point-3}}
    \end{subfigure}%
    \caption{\footnotesize\textbf{Exploration in State Space (SMM) vs. Action Space (SAC) for \emph{Navigation}}: \textbf{(a)}:\;~A point-mass agent is spawned at the center of $m$ long hallways that extend radially outward, and {\color{black}
    the target state distribution places uniform probability mass $\frac{1}{m}$ at the end of each hallway.%
    } We can vary the length of the hallway and the number of hallways to control the task difficulty. \textbf{(b)}\; A heatmap showing states visited by SAC and SMM during training illustrates that SMM explores a wider range of states. \textbf{(c)}\; SMM reaches more goals than the MaxEnt baseline. SM4 is an extension of SMM that incorporates mixture modelling with $n>1$ skills (see Appendix~\ref{sec:mixture-policies}), and further improves exploration of SMM.
    \textbf{(d)}\; \textbf{Ablation Analysis of SM4}.  On the \emph{Navigation} task, we compare SM4 (with three mixture components) against ablation baselines that lack conditional state entropy, latent conditional action entropy, or both (i.e., SAC) in the SM4 objective (Eq.~\ref{eq:smm-mop-objective}). We see that both terms contribute heavily to the exploration ability of SM4, but the state entropy term is especially critical.
    \label{fig:point-2d}}
    \vspace{-1em}
\end{figure*}

Is exploration in state space (as done by SMM) better than exploration in action space (as done by MaxEnt RL, e.g., SAC)? To study this question, we implemented a \emph{Navigation} environment, shown in Fig.~\ref{fig:point-2d-env}. 
To evaluate each method, we counted the number of hallways that the agent fully explored (i.e., reached the end) during training. Fig.~\ref{fig:point-2d-density} shows the state visitations for the three hallway environment, illustrating that SAC only explores one hallway, whereas SMM explores all three. Fig.~\ref{fig:point-2d-results} also shows that SMM consistently explores 60\% of hallways, whereas SAC rarely visits more than 20\% of hallways.

\subsection{Does Historical Averaging help other baselines?}

In Fig.~\ref{fig:hist-avg}, we see that historical averaging is not only beneficial to SMM, but also improves the exploration of prior methods. The result further supports our hypothesis that prior exploration methods are approximately optimizing the same SMM objective.

\begin{figure}[t]
    \centering
    \includegraphics[width=\columnwidth,trim=0 0 0 5pt,clip]{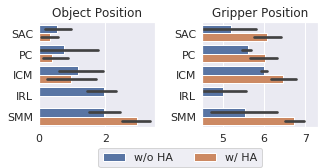}
	\caption{Analysis of historical averaging: After training, we rollout the policy for 1e3 epochs, and record the entropy of the object and gripper positions in \emph{Fetch}. SMM achieves higher state entropy than the other methods. Historical averaging also helps previous exploration methods achieve greater state coverage.\label{fig:hist-avg}}
\end{figure}

\subsection{Non-Uniform Exploration}

We check whether prior knowledge injected via the target distribution is reflected in the policy obtained from State Marginal Matching. Using the same \emph{Fetch} environment as above, we modified the target distribution to assign larger probability to states where the block was on the left half of the table than on the right half. In Fig.~\ref{fig:prior-experiment}, we measure whether SMM is able to achieve the target distribution by measuring the discrepancy between the block's horizontal coordinate and the target distribution. Compared to the SAC baseline, SMM and the Count baseline are half the distance to the target distribution. No method achieves zero discrepancy, suggesting that future methods could be better at matching state marginals.

\begin{figure}[t]
    \centering
    \includegraphics[width=0.49\columnwidth]{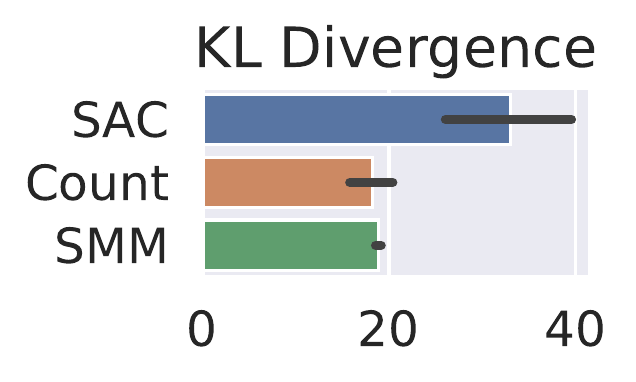}
    \includegraphics[width=0.49\columnwidth]{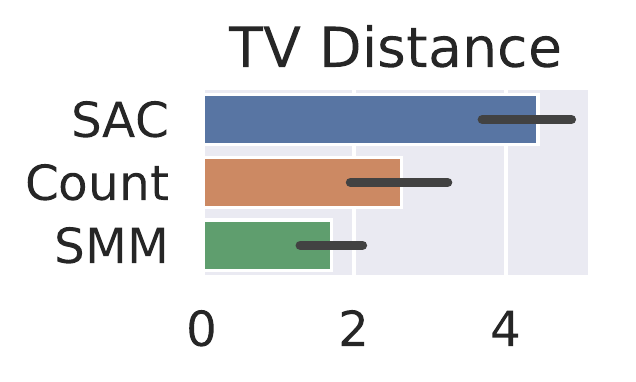}
    \caption{\textbf{Non-Uniform Exploration}:\;~We measure the discrepancy between the state marginal distribution, $\rho_\pi(s)$, and a non-uniform target distribution. SMM matches the target distribution better than SAC and is on par with Count. Error bars show std.\ dev.\ across 4 random seeds. \label{fig:prior-experiment}}
\end{figure}

\subsection{Computational Complexity}
We compare the wall-clock time of each exploration method in Table~\ref{table:wall-clock-time}. The computational cost of our method is comparable with prior work.

\begin{table}[t]
\centering
\caption{Average wall-clock time per epoch (1e3 env steps) on \emph{Fetch}.\label{table:wall-clock-time}}
\begin{tabular}{c|c|c}
Method & Wall-clock time (s) & \% Difference \\
\hline
SAC & 17.95s & +0\% \\
ICM & 22.74s & +27\% \\ 
Count & 25.24s & +41\% \\
\textbf{SMM (ours)} & \textbf{25.82s} & \textbf{+44\%}\\
PseudoCounts & 33.87s & +89\%
\end{tabular}
\end{table}

\subsection{SMM Ablation Study}

To understand the relative contribution of each component in the SM4 objective (Eq.~\ref{eq:smm-mop-objective}), we compare SM4 to baselines that lack conditional state entropy $\mathcal{H}_{\pi_z}[s] = - \log \rho_{\pi_z}(s)$, latent conditional action entropy $\log p(z \mid s)$, or both (i.e, SAC). In Fig.~\ref{fig:point-3}, we plot the training time performance on the \emph{Navigation} task with 3 halls of length 50. We see that SM4 relies heavily on both key differences from SAC.

In Fig.~\ref{fig:fetch-uniform-train}, we also plot the latent action entropy $\mathcal{H}[z \mid s]$ (discriminator) and latent state entropy $\mathcal{H}[s \mid z]$ (density model) per epoch for SM4 with varying number of mixture components.

\begin{figure}[t]
\includegraphics[width=\columnwidth]{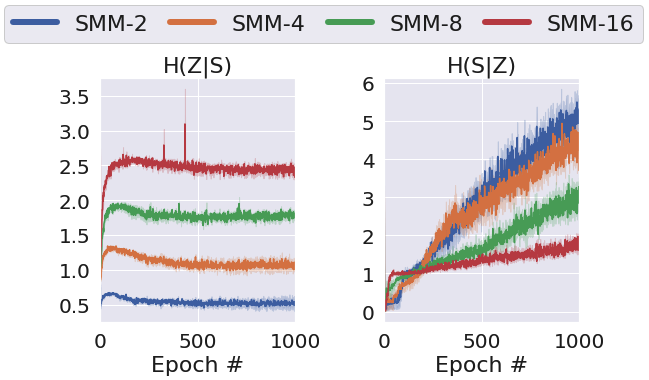}
\caption{The latent action entropy $\mathcal{H}[z \mid s]$ (discriminator) and latent state entropy $\mathcal{H}[s \mid z]$ (density model) per epoch.\label{fig:fetch-uniform-train}}
\end{figure}

\subsection{Visualizing Mixture Components of SM4}

\begin{figure*}[h]
    \centering
    \includegraphics[width=\textwidth]{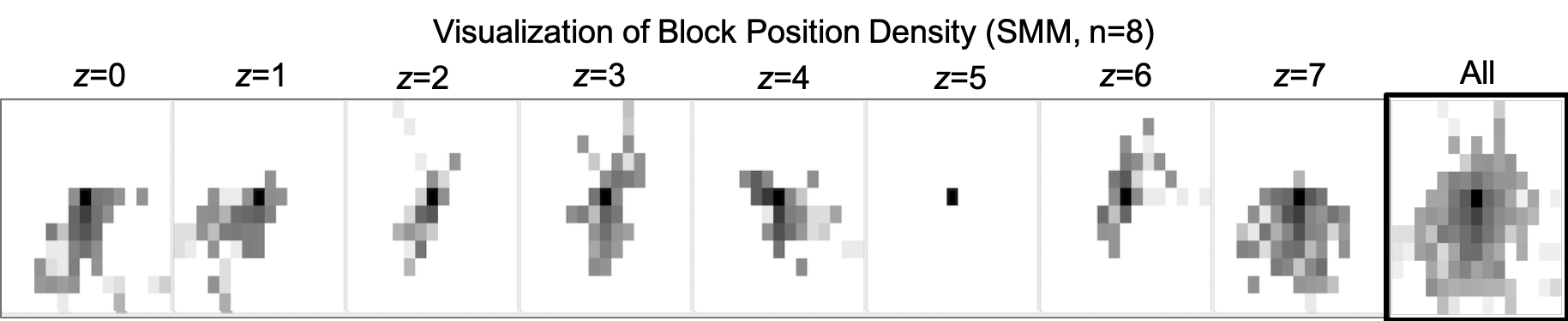}
    \caption{\textbf{SM4 with Eight Mixture Components}. In \emph{Fetch}, we plot the log state marginal $\log \rho_{\pi_z}(s)$ over block XY-coordinates for each latent component $z \in \{0, \ldots, 7\}$, results are averaged over 1000 epochs.}
    \label{fig:visualization-smm-8}
\end{figure*}

In Fig.~\ref{fig:visualization-smm-8}, we visualize the state marginals of each mixture component of SM4 for the \emph{Fetch} task. The policy was trained using a uniform target distribution.

\section{Implementation Details}
\label{appendix:implementation-details}

\subsection{Environment Details}\label{appendix:environment-details}

We summarize the environment parameters for \emph{Navigation}, \emph{Fetch}, and \emph{D'Claw} in Table~\ref{table:environment}.

\textbf{{Fetch}}. We used the simulated Fetch Robotics arm\footnote{\url{https://fetchrobotics.com/}} implemented by~\citet{plappert2018multi} using the MuJoCo simulator~\cite{todorov2012mujoco}. The state vector $s \in \mathbb{R}^{28}$ includes the xyz-coordinates $s_\text{obj}, s_\text{robot} \in \mathbb{R}^3$ of the block and the robot gripper respectively, as well as their velocities, orientations, and relative position $s_\text{obj} - s_\text{robot}$. At the beginning of each episode, we spawn the object at the center of the table, and the robot gripper above the initial block position. We terminate each episode after 50 environment steps, or if the block falls off the table.

We considered two target state marginal distributions. In \emph{Fetch-Uniform}, the target density is given by
$$
p^*(s) \propto \exp\left( \alpha_1 r_\text{goal}(s) + \alpha_2 r_\text{robot}(s) + \alpha_3 r_\text{action}(s) \right)
$$
where $\alpha_1, \alpha_2, \alpha_3 > 0$ are fixed weights, and the rewards
\begin{align*}
    r_\text{goal}(s) &:= 1 - \mathbbm{1}( \text{$s_\text{obj}$ is on the table surface}) \\
    r_\text{robot}(s) &:= \mathbbm{1}( \|s_\text{obj} - s_\text{robot} \|_2^2 < 0.1)\\
    r_\text{action}(s) &:= - \| a \|_2^2
\end{align*}
correspond to (1) a uniform distribution of the block position over the table surface (the agent receives +0 reward while the block is on the table), (2) an indicator reward for moving the robot gripper close to the block, and (3) action penalty, respectively. %
The environment reward is a weighted sum of the three reward terms: $r_\text{env}(s) \triangleq 20 r_\text{goal}(s) + r_\text{robot}(s) + 0.1 r_\text{action}(s)$. At test-time, we sample a goal block location $g \in \mathbb{R}^3$ uniformly on the table surface, and the goal is not observed by the agent.%

In \emph{Fetch-Half}, the target state density places higher probability mass to states where the block is on the left-side of the table. This is implemented by replacing $r_\text{goal}(s)$ with a reward function that gives a slightly higher reward +0.1 for states where the block is on the left-side of the table.

\textbf{{D'Claw}}. The \emph{D'Claw} robot~\citep{ahn2019robel}\footnote{\url{www.roboticsbenchmarks.org}}
controls three claws to rotate a valve object. The environment consists of a 9-dimensional action space (three joints per claw) and a 12-dimensional observation space that encodes the joint angles and object orientation. We fixed each episode at 50 timesteps, which is about 5 seconds on the real robot. In the hardware experiments, each algorithm was trained on the same four \emph{D'Claw} robots to ensure consistency.

We defined the target state distribution to place uniform probability mass over all object angles in $[-180^\circ, 180^\circ]$. It also incorporates reward shaping terms that place lower probability mass on states with high joint velocity and on states with joint positions that deviate far from the initial position (see~\citep{zhu2019dexterous}).

\textbf{Navigation}: Episodes have a maximum time horizon of 100 steps. The environment reward is
\begin{equation*}
r_\text{env}(s) = \begin{cases}
p_i & \text{if }\|s_\text{robot} - g_i\|_2^2 < \epsilon\text{ for any }i \in [n] \\
0 & \text{otherwise}
\end{cases}
\end{equation*}
where $s_{xy}$ is the xy-position of the agent.
We used a uniform target distribution over the end of all $m$ halls, so the environment reward at training time is $r_\text{env}(s) = \frac{1}{m}$ if the robot is close enough to the end of any of the halls.

We used a fixed hall length of 10 in Figures~\ref{fig:point-2d-density} and~\ref{fig:point-2d-results}, and length 50 in Fig.~\ref{fig:point-3}. All experiments used $m=3$ halls, except in Fig.~\ref{fig:point-2d-results} where we varied the number of halls $\{3, 5, 7\}$.

\begin{table*}[t]
\centering
\caption{\textbf{Environment parameters} specifying the observation space dimension $|\mathcal{S}|$; action space dimension $|\mathcal{A}|$; max episode length $T$; the environment reward, related to the target distribution by $\exp\{ r_\text{env}(s) \} \propto p^*(s)$, and other environment parameters.\label{table:environment}}
\begin{tabular}{|l|l|l|l|p{3cm}|l|l|} 
\hline
Environment                                & $|\mathcal{S}|$ & $|\mathcal{A}|$ &  $T$                           & Env Reward ($\log p^*(s))$                    & Other Parameters                                                                   & Figures  \\ 
\hline
\multirow{3}{*}{\textit{Navigation} } & \multirow{2}{*}{2}                                & \multirow{2}{*}{2~}                                  & \multirow{2}{*}{100}                                            & Uniform over all $m$ halls                      & \begin{tabular}[c]{@{}l@{}}\# Halls: 3, 5, 7\\Hall length:~ 10\end{tabular}  & \ref{fig:point-2d-density}, \ref{fig:point-2d-results}       \\ 
\cline{5-7}
                                      &                                                   &                                                      &                                                                                       & Uniform over all $m$ halls                      & \begin{tabular}[c]{@{}l@{}}\# Halls: 3\\Hall length: 50\end{tabular}         & \ref{fig:point-3}      \\ 
\hline
\multirow{2}{*}{\textit{Fetch} }      & \multirow{2}{*}{25}                               & \multirow{2}{*}{4}                                   & \multirow{2}{*}{50}             & Uniform block pos. over table surface         &                                                                              & \ref{fig:simulated-manipulation-state-entropy}, \ref{fig:visualization-algos}, \ref{fig:fetch-test}, \ref{fig:hist-avg}, \ref{fig:visualization-smm-8}, \ref{fig:fetch-uniform-train}  \\ 
\cline{5-7}
                                      &                                                   &                                                      &                                                                   & More block pos. density on left-half of table &                                                                              & \ref{fig:prior-experiment}       \\ 
\hline
\textit{D'Claw} & 12 & 9 & 50 & Uniform object angle over $[-180^\circ, 180^\circ]$ & & \ref{fig:dclaw-min-max-angle}, \ref{fig:dclaw-angle} \\ 
\hline
\end{tabular}
\end{table*}{}

\subsection{GAIL ablation}\label{section:gail}

GAIL assumes access to expert demonstrations, which SMM and the other exploration methods do not require. To compare GAIL with the exploration methods on a level footing, we sampled synthetic states from $p^*(s)$ to train GAIL, and restricted the GAIL discriminator input to states only (no actions).

For \emph{D'Claw}, we sampled the valve object angle uniformly in $[-180^\circ, 180^\circ]$. For \emph{Fetch-Uniform}, we sampled object positions $s_{\text{object}}$ uniformly on the table surface, and tried two different sampling distributions for the gripper position $s_{\text{robot}}$ (see Fig.~\ref{fig:gail-ablation}). For both environments, all other state dimensions were sampled uniformly in $[-10, 10]$, and used 1e4 synthetic state samples to train GAIL.

Since the state samples from $p^*(s)$ may not be reachable from the initial state, the policy may not be able to fool the discriminator. To get around this problem, we also tried training GAIL with the discriminator input restricted to only the state dimensions corresponding to the object position or gripper position (\emph{Fetch}), or the object angle (\emph{D'Claw}). We summarize these GAIL ablation experiments in Fig.~\ref{fig:gail-ablation}. In our experiments, we used the best GAIL ablation model to compare against the exploration baselines. %

\begin{figure*}[t]
    \centering
    \begin{subfigure}[b]{\columnwidth}
        \centering
        \includegraphics[align=t,width=0.9\textwidth]{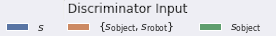}
        \includegraphics[align=t,width=0.4\textwidth]{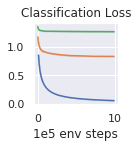}
        \includegraphics[align=t,width=0.59\textwidth,trim=0 0 130pt 0,clip]{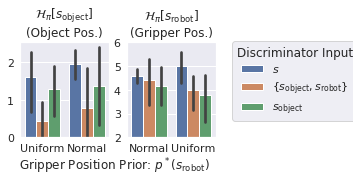}
        \caption{GAIL ablations on \emph{Fetch}\label{fig:gail-ablation-manipulation}}
    \end{subfigure}%
    \hspace{10pt}
    \begin{subfigure}[b]{\columnwidth}
        \centering
        \includegraphics[align=t,width=0.9\textwidth]{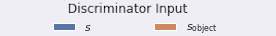}
        \includegraphics[align=t,width=0.4\textwidth]{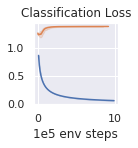}
        \includegraphics[align=t,width=0.52\textwidth,trim=0 0 130pt 0,clip]{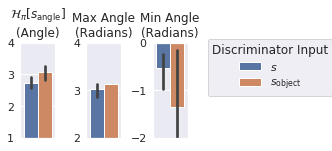}
        \vspace{12pt}
        \caption{GAIL ablations on \emph{D'Claw}\label{fig:gail-ablation-dclaw}}
    \end{subfigure}%
    \caption{\textbf{GAIL ablation study}: We studied the effect of restricting the GAIL discriminator input to fewer state dimensions. \textbf{(a) \emph{Fetch}}: We trained the GAIL discriminator on the entire state vector $s$; on the object and gripper positions $\{s_\text{object}, s_\text{robot}\}$ only; or on the object position $s_\text{object}$ only.
    We also varied the sampling distribution for the gripper position, $p^*(s_\text{robot})$: we compare using a normal distribution, $\mathcal{N}(s_{\text{object}}, I_3)$, to sample gripper positions closer to the object, versus a uniform distribution, $\text{Uniform}[-10, 10]$, for greater entropy of the sampled gripper positions. We observe that sampling gripper positions closer to the object position improves the entropy of the object position $\mathcal{H}_\pi[s_\text{object}]$, but hurts the entropy of the gripper position  $\mathcal{H}_\pi[s_\text{robot}]$.
    \textbf{(b) \emph{D'Claw}}: We restricted the discriminator to the entire state vector $s$, or to the object angle and position $s_\text{object}$. \textbf{Analysis}: In both domains, we observe that restricting the discriminator input to fewer state dimensions (e.g., to $s_\text{object}$) makes the discriminator less capable of distinguishing between expert and policy states (orange and green curves). On the other hand, training on the entire state vector $s$ causes the discriminator loss to approach 0 (i.e., perfect classification), partly because some of the ``expert'' states sampled from $p^*(s)$ are not reachable from the initial state, and the policy is thus unable to fool the discriminator.
    \label{fig:gail-ablation}}
\end{figure*}

\subsection{VAE Density Model}
In our SMM implementation, we estimated the density of data $x$ as $p(x) \approx \text{decoder}(\hat{x} = x | z=\text{encoder}(x))$. That is, we encoded $x$ to $z$, reconstruction $\hat{x}$ from $z$, and then took the likelihood of the true data $x$ under a unit-variance Gaussian distribution centered at the reconstructed $\hat{x}$. The log-likelihood is therefore given by the mean-squared error between the data $x$ and the reconstruction $\hat{x}$, plus a constant that is independent of $x$: $\log q(x) = \frac{1}{2}\|x - \hat{x}\|_2^2 + C$.

\subsection{Algorithm Hyperparameters}

We summarize hyperparameter settings in Table~\ref{table:hyperparameters}. All algorithms were trained for 1e6 steps on \emph{Fetch}, 1e6 steps on \emph{D'Claw} Sim2Real, 1e5 steps on \emph{D'Claw} hardware, and 1e5 steps on \emph{Navigation}.

\textbf{Loss Hyperparameters}. For each exploration method, we tuned the weights of the different loss components. %
\emph{SAC reward scale} controls the weight of the action entropy reward relative to the extrinsic reward. \emph{Count coeff} controls the intrinsic count-based exploration reward w.r.t.\ the extrinsic reward and SAC action entropy reward. Similarly, \emph{Pseudocount coeff} controls the intrinsic pseudocount exploration reward. \emph{SMM coeff for $\mathcal{H}[s \mid z]$ and $\mathcal{H}[z \mid s]$} control the weight of the different loss components (state entropy and latent conditional entropy) of the SMM objective in Eq.~\ref{eq:smm-mop-objective}.

\textbf{Historical Averaging}. In the \emph{Fetch} experiments, we tried the following sampling strategies for historical averaging: (1) \emph{Uniform}: Sample policies uniformly across training iterations. (2) \emph{Exponential}: Sample policies, with recent policies sampled exponentially more than earlier ones. (3) \emph{Last}: Sample the $N$ latest policies uniformly at random. We found that \emph{Uniform} worked less well, possibly due to the policies at early iterations not being trained enough. We found negligible difference in the state entropy metric between \emph{Exponential} vs. \emph{Last}, and between sampling 5 vs. 10 historical policies, and we also note that it is unnecessary to keep checkpoints from every iteration.

\textbf{Network Hyperparameters}. For all algorithms, we use a Gaussian policy with two hidden layers with Tanh activation and a final fully-connected layer. The Value function and Q-function each are a feedforward MLP with two hidden layers with ReLU activation and a final fully-connected layer. Each hidden layer is of size 300 (SMM, SAC, ICM, C, PC) or 256 (GAIL). The same network configuration is used for the SMM discriminator, $d(z \mid s)$, and the GAIL discriminator, but with different input and output sizes. The SMM density model, $q(s)$, is modeled by a VAE with encoder and decoder networks each consisting of two hidden layers of size (150, 150) with ReLU activation. The same VAE network configuration is used for Pseudocount.

\textbf{GAIL Hyperparameters}: The replay buffer is filled with 1e4 random actions before training, for training stability. We perform one discriminator update per SAC update. For both \emph{Fetch} and \emph{D'Claw}, we used 1e4 states sampled from $p^*(s)$. Other hyperparameter settings, such as batch size for both discriminator and policy updates, are summarized in Table~\ref{table:hyperparameters}. We observed that GAIL training is more unstable compared to the exploration baselines. Thus, for GAIL, we did not take the final iterate (e.g., policy at convergence) but instead used early termination (e.g., take the best iterate according to the state entropy metric). %

\begin{table*}[t]\centering\footnotesize
\caption{\footnotesize\textbf{Hyperparameter settings}. Hyperparameters were chosen according to the following eval metrics: \emph{Fetch-Uniform}: State entropy of the discretized gripper and block positions (bin size 0.05), after rolling out the trained policy for 50K env steps. \emph{Fetch-Half}: $\kl{p^*(s)}{\rho_\pi(s)}$ and $\text{TV}(p^*(s), \rho_\pi(s))$ of the discretized gripper and block positions (bin size 0.01), after rolling out the trained policy for 50K env steps. \emph{2D Navigation}: State entropy of the discretized XY-positions of the trained policy. \emph{D'Claw}: State entropy of the object angle. \label{table:hyperparameters}}
\begin{tabular}{|l|l|l|l|}
\hline
\textbf{Environment} & \textbf{Algorithm} & \textbf{Hyperparameters Used}  & \textbf{Hyperparameters Considered}   \\ \hline
\multirow{2}{*}{All}    & \begin{tabular}[c]{@{}l@{}}SMM, SAC,\\ICM, Count,\\Pseudocount\end{tabular}       & \begin{tabular}[c]{@{}l@{}}Batch size: 128\\ 1e6 env training steps\\ RL discount: 0.99\\ Network size: 300\\ Policy lr: 3e-4\\ Q-function lr: 3e-4\\ Value function lr: 3e-4\end{tabular} & N/A (Default SAC hyperparameters)    \\
\cline{2-4}
& GAIL & \begin{tabular}[c]{@{}l@{}}
1e6 env training steps\\
Policy lr: 1e-5\\
Critic lr: 1e-3\\
\# Random actions\\
\phantom{xx}before training: 1e4\\
Network size: 256
\end{tabular}
& N/A (Default GAIL hyperparameters)\\
\hline %
\multirow{2}{*}{\begin{tabular}[c]{@{}l@{}}\emph{Navigation}\end{tabular}}                  & SMM, SAC       & SAC reward scale: 25                                                                                                                                                                               & SAC reward scale: 1e-2, 0.1, 1, 10, 25, 100                                                                                                                                                \\ \cline{2-4}
& SMM & \begin{tabular}[c]{@{}l@{}}SMM $\mathcal{H}[s \mid z]$ coeff: 1\\SMM $\mathcal{H}[z \mid s]$ coeff: 1\\\end{tabular} & \begin{tabular}[c]{@{}l@{}}SMM $\mathcal{H}[s \mid z]$ coeff: 1e-3, 1e-2, 1e-1, 1, 10\\SMM $\mathcal{H}[z \mid s]$ coeff: 1e-3, 1e-2, 1e-1, 1, 10\\\end{tabular} \\ \hline
\multirow{6}{*}{\begin{tabular}[c]{@{}l@{}}\emph{Fetch-Uniform}\end{tabular}} & SMM    & \begin{tabular}[c]{@{}l@{}}Num skills: 4\\ VAE lr: 1e-2\\SMM $\mathcal{H}[s \mid z]$ coeff: 1\\SMM $\mathcal{H}[z \mid s]$ coeff: 1\\ HA sampling: Exponential\\ \# HA policies: 10\\ SMM Latent Prior Coeff: 1\end{tabular}  & \begin{tabular}[c]{@{}l@{}}Num skills: 1, 2, 4, 8, 16\\ VAE lr: 1e-4, 1e-3, 1e-2\\\\\\ HA sampling: Exponential, Uniform, Last\\ \# HA policies: 5, 10\\ SMM Latent Prior Coeff: 1, 4\end{tabular}  \\ \cline{2-4}
                               & SAC       & SAC reward scale: 0.1                                                                                                                                                                              & SAC reward scale: 0.1, 1, 10, 100                                                                                                                                                          \\ \cline{2-4} 
                               & Count     & \begin{tabular}[c]{@{}l@{}}Count coeff: 10\\ Histogram bin width: 0.05\end{tabular}                                                                                                                & \begin{tabular}[c]{@{}l@{}}Count coeff: 0.1, 1, 10\\\end{tabular}                                                                                                                                                                    \\ \cline{2-4} 
                               & Pseudocount        & \begin{tabular}[c]{@{}l@{}}Pseudocount coeff: 1\\ VAE lr: 1e-2\end{tabular}                                                                                                                                 & \begin{tabular}[c]{@{}l@{}}Pseudocount coeff: 0.1, 1, 10\\ (Use same VAE lr as SMM)\end{tabular}                                                                                                    \\ \cline{2-4} 
                               & ICM       & Learning rate: 1e-3                                                                                                                                                                                & Learning rate: 1e-4, 1e-3, 1e-2                                                                                                                                                            \\ \cline{2-4} 
& GAIL       &\begin{tabular}[c]{@{}l@{}}
Batch size: 512\\
\# SAC updates per step: 1\\
Discriminator input: $s$\\
Training iterate: 1e6\\
\# State Samples: 1e4
\end{tabular}                                                                                                                                                                                & \begin{tabular}[c]{@{}l@{}}
Batch size: 128, 512, 1024\\
\# SAC updates per step: 1, 4\\
Discriminator input: $s$, $s_\text{object}$, $\{s_\text{object}, s_\text{robot}\}$\\
Training iterate: 1e5, 2e5, 3e5, $\ldots$, 9e5, 1e6\\
\# State Samples: 1e4
\end{tabular}\\ \hline
\multirow{4}{*}{\begin{tabular}[c]{@{}l@{}}\emph{Fetch-Half}\end{tabular}}    & \begin{tabular}[c]{@{}l@{}}SMM, SAC,\\ICM, Count\end{tabular}       & SAC reward scale: 0.1                                                                                                                                                                              & (Best reward scale for \emph{Fetch-Uniform})                                                                                                                                                      \\ \cline{2-4} 
                               & SMM       & \begin{tabular}[c]{@{}l@{}}Num skills: 4\\SMM $\mathcal{H}[s \mid z]$ coeff: 1\\SMM $\mathcal{H}[z \mid s]$ coeff: 1\\\end{tabular}                                                                                                                                                                                   & \begin{tabular}[c]{@{}l@{}}Num skills: 1, 2, 4, 8\\\\\end{tabular}                                                                                                                                                                   \\ \cline{2-4} 
                               & Count     & \begin{tabular}[c]{@{}l@{}}Count coeff: 10\\ Histogram bin width: 0.05\end{tabular}                                                                                                                & \begin{tabular}[c]{@{}l@{}}Count coeff: 0.1, 1, 10\\\end{tabular}                                                                                                                                                                   \\ \cline{2-4} 
                               & ICM       & Learning rate: 1e-3                                                                                                                                                                                & Learning rate: 1e-4, 1e-3, 1e-2                                                                                                                                                            \\ \hline
\multirow{6}{*}{\begin{tabular}[c]{@{}l@{}}\emph{D'Claw}\end{tabular}}                  & \begin{tabular}[c]{@{}l@{}}SMM, SAC\end{tabular}       & SAC reward scale: 5                                                                                                                                                                               & SAC reward scale: 1e-2, 0.1, 1, 5, 10, 100                                                                                                                                                \\ \cline{2-4}
& SMM & \begin{tabular}[c]{@{}l@{}}SMM $\mathcal{H}[s \mid z]$ coeff: 250\end{tabular} & SMM $\mathcal{H}[s \mid z]$ coeff: 1, 10, 100, 250, 500, 1e3 \\

\cline{2-4} 
& Count       &\begin{tabular}[c]{@{}l@{}}
Count coeff: 1\\
Histogram bin width: 0.05
\end{tabular}                                                            & \begin{tabular}[c]{@{}l@{}}
Count coeff: 1, 10\\
Histogram bin width: 0.05, 0.1
\end{tabular}\\

\cline{2-4} 
& Pseudocount       &\begin{tabular}[c]{@{}l@{}}
Pseudocount coeff: 1\\
VAE lr: 1e-3
\end{tabular}                                                             & \begin{tabular}[c]{@{}l@{}}
Pseudocount coeff: 1, 10\\
VAE lr: 1e-1, 1e-2, 1e-3
\end{tabular}\\

\cline{2-4} 
& ICM       &\begin{tabular}[c]{@{}l@{}}
Learning rate: 1e-3\\
VAE lr: 1e-1
\end{tabular}                                                             & \begin{tabular}[c]{@{}l@{}}
Learning rate: 1e-2, 1e-3, 1e-4\\
VAE lr: 1e-1, 1e-2, 1e-3
\end{tabular}\\

\cline{2-4} 
& GAIL       &\begin{tabular}[c]{@{}l@{}}
Batch size: 512\\
\# SAC updates per step: 4\\
Discriminator input: $s_\text{object}$\\
Training iterate: 1e5\\
\# State Samples: 1e4
\end{tabular}                                                             & \begin{tabular}[c]{@{}l@{}}
Batch size: 128, 512, 1024\\
\# SAC updates per step: 1, 4\\
Discriminator input: $s$, $s_\text{object}$\\
Training iterate: 1e5, 2e5, 3e5, $\ldots$, 9e5, 1e6\\
\# State Samples: 1e4
\end{tabular}\\ \hline
\end{tabular}
\end{table*}{}

\end{document}